\documentclass{bmvc2k}


\usepackage{times}
\usepackage{epsfig}
\usepackage{amsmath}
\usepackage{amssymb}
\usepackage{pifont}

\usepackage{multirow}
\usepackage{tabularx}
\usepackage{dsfont}
\usepackage{url}
\usepackage{subfigure}
\usepackage{amsthm}
\usepackage[export]{adjustbox}
\usepackage{algorithmic}

\newtheorem{theorem}{Theorem}[section]
\newtheorem{lemma}[theorem]{Lemma}

\theoremstyle{definition}

\theoremstyle{remark}

\title{Learning Non-Parametric Invariances from Data with Permanent Random Connectomes}

\addauthor{Dipan K. Pal}{dipanp@andrew.cmu.edu}{1}
\addauthor{Akshay Chawla}{akshaych@andrew.cmu.edu}{1}
\addauthor{Marios Savvides}{marioss@andrew.cmu.edu }{1}

\addinstitution{
Dept. Electrical and Computer Engg.\\
 Carnegie Mellon University\\
 Pittsburgh, PA, USA
}

\runninghead{Permanent Random Connectome Networks}{}


\begin{document}

\maketitle

\begin{abstract}
  Learning non-parametric invariances directly from data remains an important open problem. In this paper, we introduce a new architectural layer for convolutional networks which is capable of learning general invariances from data itself. This layer can learn invariance to non-parametric transformations and interestingly, motivates and incorporates permanent random connectomes, thereby being called Permanent Random Connectome Non-Parametric Transformation Networks (PRC-NPTN). PRC-NPTN networks are initialized with random connections (not just weights) which are a small subset of the connections in a fully connected convolution layer. Importantly, these connections in PRC-NPTNs once initialized remain permanent throughout training and testing.  Permanent random connectomes make these architectures loosely more biologically plausible than many other mainstream network architectures which require highly ordered structures. We motivate randomly initialized connections as a simple method to learn invariance from data itself while invoking invariance towards multiple nuisance transformations simultaneously. We find that these randomly initialized permanent connections have positive effects on generalization, outperform much larger ConvNet baselines and the recently proposed Non-Parametric Transformation Network (NPTN) on benchmarks such as augmented MNIST, ETH-80 and CIFAR10, that enforce learning invariances from the data itself. 
\end{abstract}

\section{Introduction}
\textbf{The Problem of Invoking Invariances.} Learning invariances to nuisance transformations in data has emerged to be a core problem in machine learning \cite{anselmi2013unsupervised,hadsell2006dimensionality,gens2014deep,jaderberg2015spatial,cohen2016group}. Moving towards real-world data of different modalities, it is a daunting task to theoretically model all nuisance transformations. Towards this goal, methods which \textit{learn} non-parametric invariances from the data itself without any change in architecture will be critical. However, before delving into methods which learn such invariances, it is important to study methods which incorporate \textit{known} invariances in data. An early method to incorporate the translation prior was the Convolutional Neural Network (ConvNet) \cite{lecun1998gradient}. Over the years, there have been efforts in investigating what other transformations result in useful hand-crafted priors in data such as rotation and scale \cite{dieleman2015rotation,teney2016learning,li2017deep,mallat2013deep,cohen2016group,henriques2016warped,cohen2016steerable}. It is important to note however that these methods ultimately were limited to hand-crafted invariances assumed to be useful for the task at hand.


\textbf{Motivation of this Study:}  In this study, we motivate and investigate one possible architecture that can learn invariances towards multiple transformations from data itself. At the heart of the architecture is the structure called the \textit{permanent random} connectome. This simply refers to a channel shuffling layer that uses a fixed shuffling schedule throughout the life of the network (including training and testing) resulting in a \textit{permanent} connectome. Importantly however, this shuffling indexing is chosen at random at the initialization of the network. Thereby leading to the layer being referred to as the permanent \textit{random} connectome. We find that layers utilizing the permanent random operation allow architectures to learn multiple invariances efficiently from data itself. Our motivation also loosely stems from observations regarding connectomes in the cortex.



\begin{figure}
    \begin{center}
            \includegraphics[width=0.95\columnwidth,valign=m]{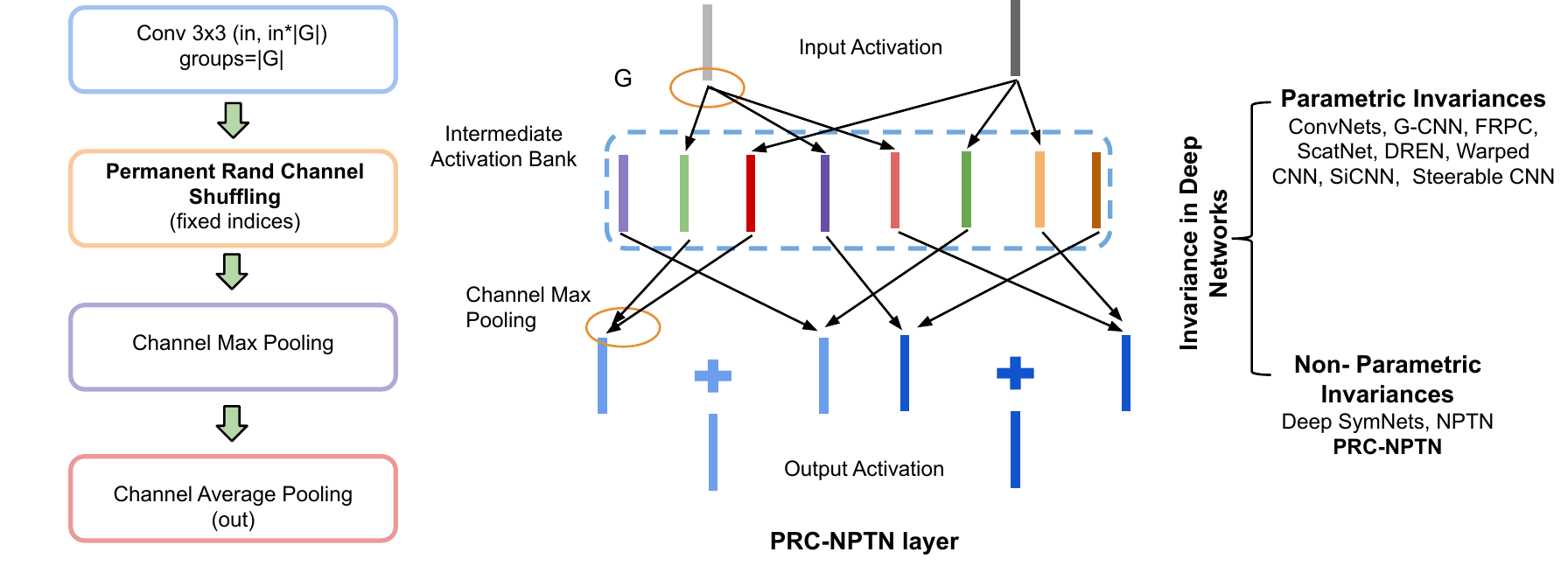}\label{fig_2_a}
    \end{center}
    \vspace{-0.5cm}
\caption{ \textbf{ Left:} Operations comprising the PRC-NPTN layer. The number of input and output channels in the Conv layer is (inch) and (inch$*|G|$) respectively. G is the number of filters (linear transformations learnt) for each input channel. The key operation proposed is the Permanent Random Channel shuffling operation with a fixed index mapping for every forward pass. This indexing or connectome is initialized randomly during network initialization. \textbf{ Center:} Architecture of the PRC-NPTN layer. Each input channel is convolved with a number of filters (parameterized by G). Each of the resultant activation maps is connected to a one of the channel max pooling units randomly (initialized once, fixed during training and testing). Each channel pooling unit pools over a fixed random support of a size parameterized by CMP. \textbf{ Right:} Explicit invariances enforced within deep networks in prior art are mostly parametric in nature. The important problem of learning \textit{non-parametric} invariances from data has not received a lot of attention.}
\label{fig_PRC-NPTN}
 \vspace{-0.5cm}

\end{figure}

\textbf{Prior Art Learning Invariances from Data or using Random Connectomes.} A different class of architectures that have been recently proposed explicitly attempt to \textit{learn} the transformation invariances directly from the data, with the only inductive bias being the \textit{structure or architecture} that allows them to do so. One of the earliest attempts using backpropagation was the SymNet \cite{gens2014deep}, which utilized kernel based interpolation to learn general invariances. Although given the interesting nature of the study, the method was limited in scalability. Spatial Transformer Networks \cite{jaderberg2015spatial} were also designed to learn activation normalization from data itself, however the transformation invariance learned was parametric in nature. A more recent effort was through the introduction of the Transformation Network paradigm \cite{pal2018nptn}. Non-Parametric Transformation Networks (NPTN)  were introduced as an generalization of the convolution layer to model general symmetries from data \cite{pal2018nptn}. It was also introduced as an alternate direction of network development other than skip connections. The convolution operation followed by pooling was re-framed as pooling across outputs from the translated versions of a filter. Translation forming a unitary group generates invariance through group symmetry \cite{anselmi2013unsupervised}. The NPTN framework has the important advantage of learning general invariances without any change in architecture while being scalable.  Given this is an important open problem, we introduce an extension of the Transformation Network (TN) paradigm with an enhanced ability to learn non-parametric invariances through permanent random connectivity.   There have been many seminal works that have indeed explored the role of \textit{temporary} random connections in deep networks such as DropOut \cite{srivastava2014dropout}, DropConnect \cite{wan2013regularization} and Stochastic Pooling \cite{zeiler2013stochastic}. However, unlike the proposed approach, the connections in these networks randomly change at \textit{every} forward pass, hence are temporary. More recently, random permanent connections were explored for large scale architectures \cite{xie2019exploring}. It is important however to note that the basic unit of computation, the convolutional layer, remained unchanged. Our study explores permanent random connectomes \textit{within} the convolutional layer itself, and explores how it can learn non-parametric invariances to multiple transformations simultaneously in a simple manner. We briefly visit other deep architectures that have been proposed over the years in the supplementary.


\begin{figure*}
    \begin{center}
        \subfigure[Homogeneous Structured Pooling]{%
        \centering
            \includegraphics[width=0.25\columnwidth,valign=m]{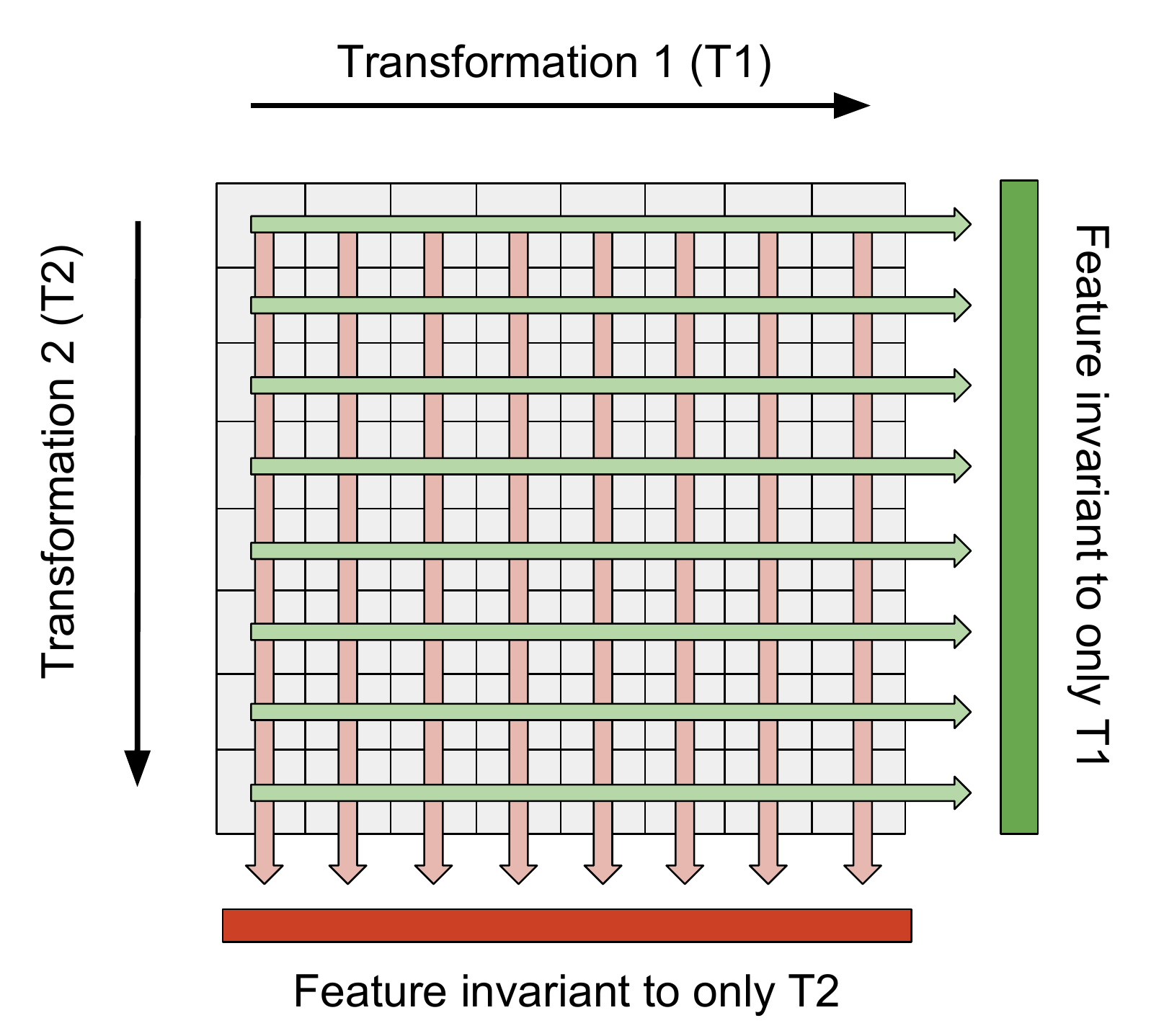}\label{fig_1_a}
        }%
         \subfigure[Permanent Random Support Pooling]{%
        \centering
        \includegraphics[width=0.25\columnwidth,valign=m]{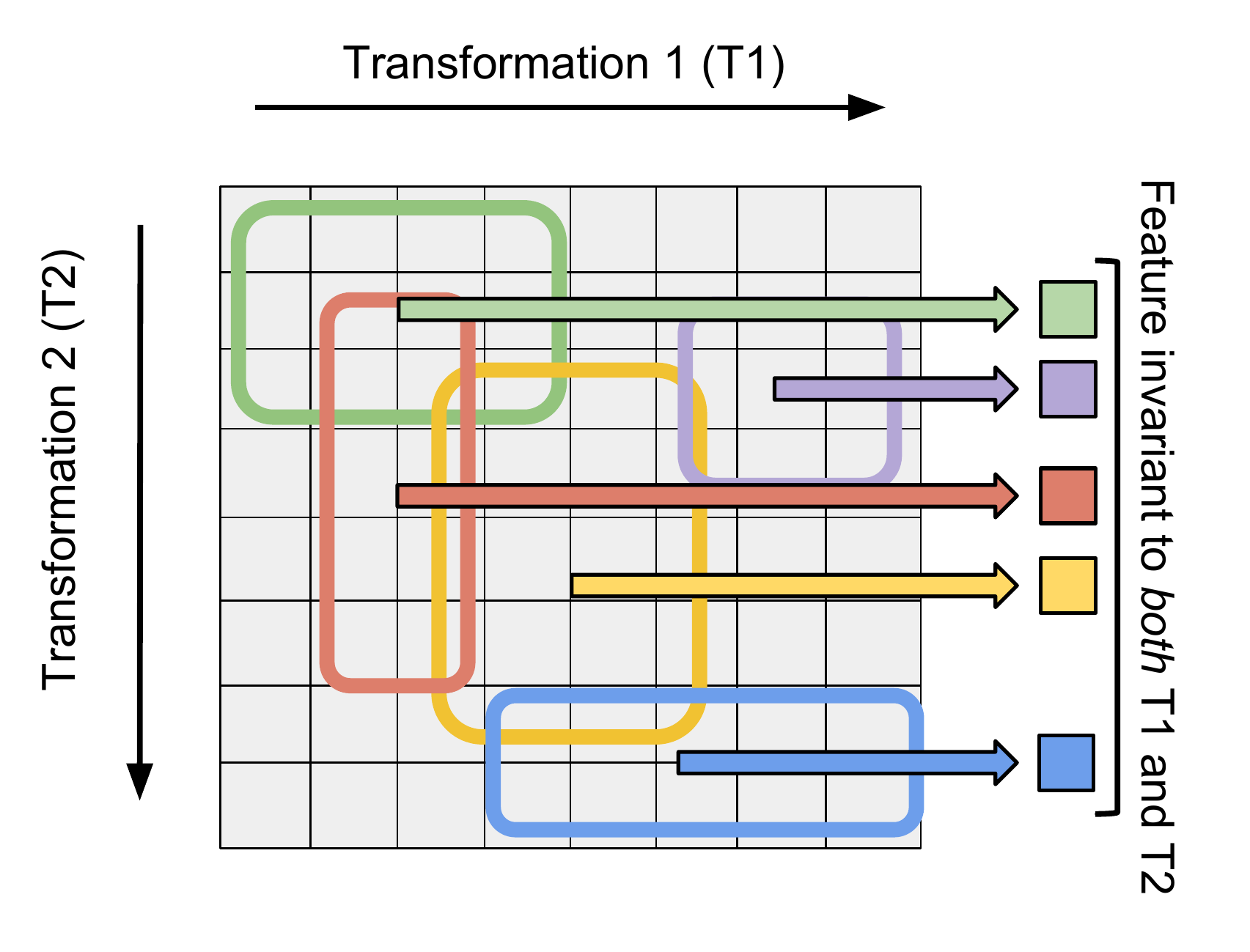}\label{fig_1_b}
        }
        \subfigure[PRC-NPTN Random Channel Pooling]{%
        \centering
         \includegraphics[width=0.4\columnwidth,valign=m]{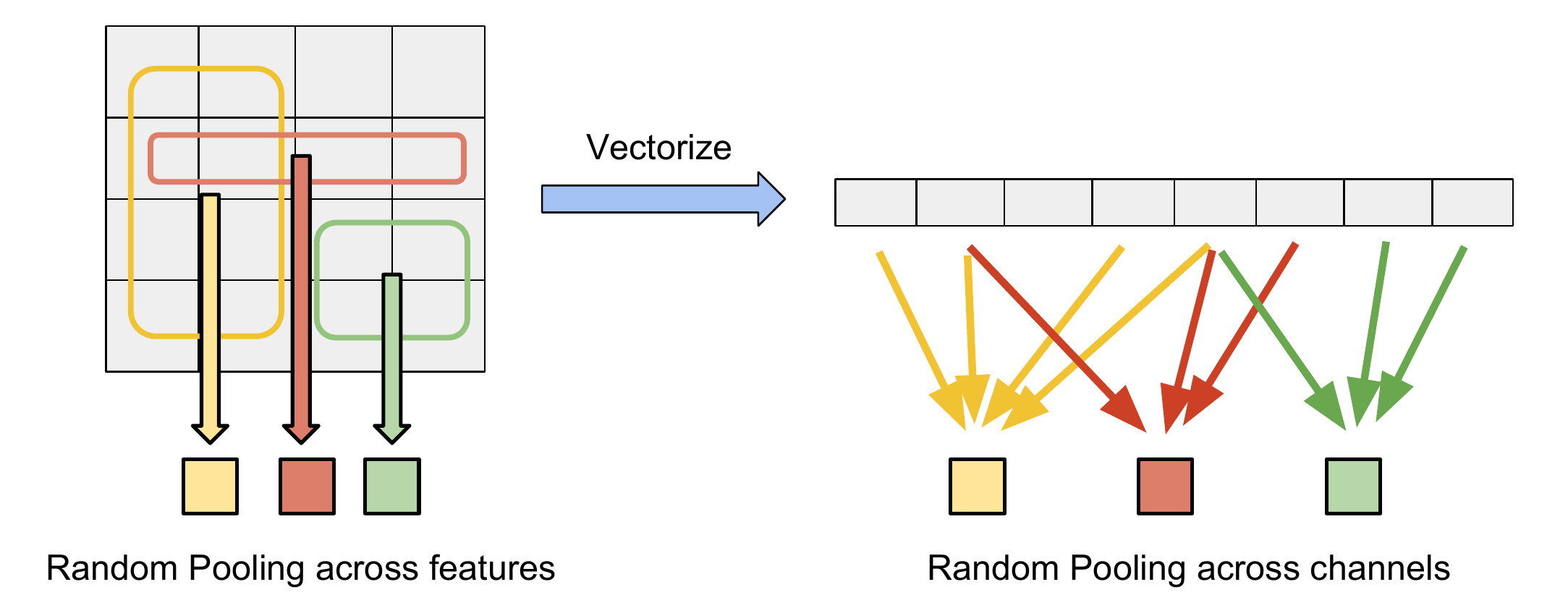}\label{fig_channelpooling}
        }

    \end{center}
    \vspace{-0.5cm}
\caption{ \textbf{(a) Homogeneous Structured Pooling} pools across the entire range of transformations of the \textit{same kind} leading to feature vectors invariant only to that particular transformations. Here, two \textit{distinct} feature vectors are invariant to transformation $T_1$ and $T_2$ independently. \textbf{(b) Heterogeneous Random Support Pooling} pools across randomly selected ranges of multiple transformations \textit{simultaneously}. The pooling supports are defined during initialization and remain fixed during training and testing. This results in a \textit{single} feature vector that is invariant to multiple transformations simultaneously. Here, each colored box defines the support of the pooling and pools across features only inside the boxed region leading to one single feature. \textbf{(c) Vectorized Random Support Pooling} extends this idea to convolutional networks, where one realizes that the random support pooling on the feature grid (on the left) is equivalent to random support pooling of the vectorized grid. Each element of the vector (on the right) now represents a single \textit{channel} in a convolutional network and hence random support pooling in PRC-NPTNs occur across channels.    }
\label{fig_conv_nptn}
    \vspace{-0.5cm}

\end{figure*}

\textbf{Relaxed Biological Motivation for Randomly Initialized Connectomes.} Although not central to our motivation, the observation that the cortex lacks precise \textit{local} pathways for back-propagation provided the initial inspiration for this study. It further garnered pull from the observation that random unstructured local connections are indeed common in many parts of the cortex \cite{corey2012cortical,schottdorf2015random}.  Moreover, it has been shown that orientation selectivity can arise in the visual cortex even through local random connections \cite{hansel2012mechanism}. Though we do not explore these biological connections in more detail, it is still an interesting observation. There has also been some interesting work which explored the use of random weight matrices for back propagation \cite{lillicrap2016random}. Here, the forward weight matrices were updated so as to fruitfully use the random weight matrices during back propagation. The motivation of the \cite{lillicrap2016random} study was to address the biological implausibility of the transport of precise gradients through the cortex due to the lack of exact connections and pathways \cite{grossberg1987competitive,stork1989backpropagation,mazzoni1991more,xie2003equivalence}.  The common presence of  random connections in the cortex at a \textit{local} level leads us to ask: Is it possible that such locally random connectomes improve generalization in deep networks? We provide evidence for answering this question in the positive.  

\textbf{Contributions.} 1) We motivate permanent random connectomes from the perspective of learning invariance to multiple transformations directly from data. The fundamental problem of learning non -parametric invariances in perception has not received enough attention. We present an architectural prior capable of such a task with loose biological motivation. 2) We present a theoretical result on learning invariances to transformations which do not obey a group structure in contrast to prior work. 3) We provide results on learning invariances to individual and multiple transformations in data without any change in architecture whatsoever. Further, we demonstrate improvements in generalization while using PRC-NPTN as a drop in replacement to conv layers in DenseNets. 4) Finally, as an engineering effort, we develop fast and efficient CUDA kernels for random channel pooling which result in efficient implementations of PRC-NPTNs in terms of computational speed and memory requirements compared to traditional Pytorch code.

\section{Permanent Random Connectome NPTNs}

 We begin by motivating permanent random connectomes from the perspective of selecting each specific support for pooling. We find that permanent random channel pooling invokes invariance to multiple transformations simultaneously. Investigating idea of pooling across transformations to invoke invariance, permanent random pooling emerges naturally. As part of our contribution, we present a theoretical result which confirms a long standing intuition that max pooling invokes invariance.
 
\textbf{Invoking Invariance through Max Pooling.} In previous years a number of theories have emerged on the mechanics of generating invariance through pooling. \cite{anselmi2013unsupervised,anselmi2017symmetry} develop a framework in which the transformations are modelled as a group comprised of unitary operators denoted by $\{ g \in \mathcal{G} \}  $. These operators transform a given filter $w$ through the operation $gw$\footnote{The action of the group element $g$ on $w$ is denoted by $gw$ to promote clarity.}, following which the dot-product between these transformed filters and an novel input $x$ is measured through $\langle x, gw  \rangle$. It was shown by \cite{anselmi2013unsupervised} that any moment such as the mean or max (infinite moment) of the distribution of these dot-products in the set $\{ \langle x, gw  \rangle  | g \in \mathcal{G}\}$ is an invariant. These invariants will exhibit robustness to the transformation in $\mathcal{G}$ encoded by the transformed filters in practice, as confirmed by \cite{liao2013learning,pal2016discriminative}. Though this framework did not make any assumptions on the distribution of the dot-products, it imposed the restricting assumption of group symmetry on the transformations. We now show that invariance can be invoked even when \textit{avoiding the assumption that the transformations in $\mathcal{G}$ need to form a group}. Nonetheless, we assume that the distribution of the dot-product $ \langle x, gw  \rangle $ is uniform and thus we have the following result\footnote{We provide a proof in the supplementary and thank Purvasha Chakravarti at CMU for the proof.. The assumption of the distribution being uniform is meant to provide insight into the general behavior of the max pooling operation, rather than a statement that deep learning features are uniformly distributed.}.

\begin{lemma}\label{lem_invariance} (Invariance Property) Assume a  novel test input $x$ and a  filter $ w$ both fixed vectors $ \in \mathbb{R}^d$. Further, let $g$ denote a random variable representing unitary operators with some distribution. Finally, let  $\Upsilon(x) =  \langle  x, gw  \rangle$, with $\Upsilon(x)   \sim  U(a,b) $   \emph{i.e.} a Uniform distribution between $a$ and $b$. Then, we have $$ \mathrm{Var}(\max \Upsilon(x)) \leq  \mathrm{Var}( \Upsilon (x) ) = \mathrm{Var} (\langle  g^{-1}x, w  \rangle) $$

\end{lemma}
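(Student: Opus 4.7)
The plan is to first unpack the statement. The identity $\mathrm{Var}(\langle x, gw\rangle) = \mathrm{Var}(\langle g^{-1}x, w\rangle)$ is essentially by inspection: unitarity gives $g^* = g^{-1}$, so $\langle x, gw\rangle = \langle g^{-1}x, w\rangle$ pointwise in $g$, and hence the two random variables have identical distributions. All the substance is in the inequality $\mathrm{Var}(\max \Upsilon(x)) \leq \mathrm{Var}(\Upsilon(x))$, which I interpret as follows: pooling in the paper is over a finite collection of transformations $g_1,\dots,g_n$, so $\max\Upsilon(x)$ means $M_n := \max_{i\le n} \Upsilon_i(x)$ where $\Upsilon_i(x) := \langle x, g_i w\rangle$ are $n$ iid copies of $U(a,b)$ under the stated assumption.

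Next I would reduce to a clean calculation with uniform order statistics. Since the claim is affine-invariant in $(a,b)$, it suffices to work with $U(0,1)$ and rescale by $(b-a)^2$ at the end. For $U_1,\dots,U_n \stackrel{\text{iid}}{\sim} U(0,1)$, the CDF of $M_n$ is $t^n$ on $[0,1]$, giving the standard moments
\begin{equation*}
\mathbb{E}[M_n] = \frac{n}{n+1}, \qquad \mathbb{E}[M_n^2] = \frac{n}{n+2},
\end{equation*}
and hence $\mathrm{Var}(M_n) = \dfrac{n}{(n+1)^2(n+2)}$. Meanwhile $\mathrm{Var}(U_1) = 1/12$. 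Rescaling, the inequality becomes
\begin{equation*}
\frac{n(b-a)^2}{(n+1)^2(n+2)} \;\le\; \frac{(b-a)^2}{12}.
\end{equation*}

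The remaining task is the algebraic inequality $12n \le (n+1)^2(n+2)$ for all integers $n\ge 1$. Expanding $(n+1)^2(n+2) = n^3 + 4n^2 + 5n + 2$, I would rewrite this as $n^3 + 4n^2 - 7n + 2 \ge 0$ and observe the convenient factorization
\begin{equation*}
n^3 + 4n^2 - 7n + 2 = (n-1)\bigl(n^2 + 5n - 2\bigr),
\end{equation*}
which is nonnegative for every integer $n \ge 1$ (with equality at $n=1$, matching the trivial case of pooling over a single transformation). This closes the inequality and completes the proof.

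The argument is almost entirely a direct calculation, so there is no real obstacle; the only judgement call is to fix the interpretation of $\max\Upsilon(x)$ as a maximum over $n$ iid samples indexed by the pooling support, which is the reading consistent with the rest of Section 2. I would state this interpretation explicitly at the start so that the remaining steps are unambiguous.
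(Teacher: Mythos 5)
Your proposal is correct and follows essentially the same route as the paper's proof: reduce to $U(0,1)$ by affine rescaling, compute the moments of the maximum of $n$ iid uniforms to get $\mathrm{Var}(M_n) = \frac{n}{(n+1)^2(n+2)}$, compare with $\frac{1}{12}$, and invoke unitarity for the final identity. The only cosmetic difference is the last step, where the paper argues that $\mathrm{Var}(X_{(n)})$ is decreasing in $n$ with value $\frac{1}{12}$ at $n=1$, while you verify the equivalent polynomial inequality $12n \le (n+1)^2(n+2)$ directly via the factorization $(n-1)(n^2+5n-2)\ge 0$.
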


This result is interesting because it shows that the max operation of the dot-products has less variance due to $g$ than the pre-pooled features. Though this is largely known empirical result, a concrete proof for invoking invariance was so far missing. More importantly, it bypasses the need for a group structure on the nuisance transformations $\mathcal{G}$. Practical studies such as \cite{liao2013learning,pal2016discriminative} had ignored the effects of non-group structure in theory while demonstrating effective empirical results. Also note that the variance of the max is less than the variance of the quantity $\langle g^{-1}x, w \rangle$, which implies that  $\max \Upsilon(x)$ is more robust to $g$ even in test, though it has never observed $gx$. This useful property is due to the unitarity of $g$.

\textbf{Connection to Deep Networks.} PRC-NPTN as we will show, perform max pooling across \textit{channels} not space, to invoke invariance. In the framework $\langle x, gw  \rangle$, $w$ would be one convolution filter with $g$ being the transformed version of it. Note that this modelling is done only to satisfy a theoretical construction, we do not actually transform filters in practice. All transformed filters are learnt through backpropagation. This framework is already utilized in ConvNets. For instance, ConvNets \cite{lecun1998gradient} pool only across translations (convolution operation itself followed by \textit{spatial} max pooling implies $g$ to be translation).


\textbf{Invoking Invariance through Channel Pooling in Deep Networks.}  Consider a grid of features that have been obtained through a dot product $\langle x, gw  \rangle$ (for instance from a convolution activation map, where the grid is simply populated with each $k\times k \times 1$ filter, not $k\times k \times c$) (see Fig.~\ref{fig_1_a}). Assume that along the two axes of the grid, two different kinds of transformation are acted. $T_1$ along the horizontal axis and $T_2$ along the vertical. $T_1=g_1(\cdot; \theta_1)$ where $g_1$ is a transformation parameterized by $\theta_1$ that acts on $w$ and similarly  $T_2=g_2(\cdot; \theta_2)$. Now, pooling homogeneously across one axis invokes invariance \textit{only} to the corresponding $g$ (for a more in depth analysis see \cite{anselmi2013unsupervised}). Similarly, pooling along $T_2$ only will result in a feature vector (Feature 2)  invariant \textit{only} to $T_2$. These representations (Feature 1 and 2) have complimentary invariances and can be used for complimentary tasks \emph{e.g.}  face recognition (invariant to pose) versus pose estimation (invariant to subject). This approach has one major limitation that this scales linearly with the number of transformations which is impractical. One therefore would need features that are invariant to multiple transformations simultaneously. A simple yet effective approach is to pool along all axes thereby being invariant to all transformations simultaneously. However, doing so will result in a degenerative feature (that is invariant to everything and discriminative to nothing). Therefore, the key is to limit the \textit{range} of pooling performed for each transformation.






\textbf{Choosing the Support for Pooling at Random: Permanent Random Connectomes.} A solution to trivial feature problem described above, is to limit the \textit{range} or support of pooling as illustrated in Fig.\ref{fig_1_b}. One simple way of selecting such a support for pooling is at random. This selection would happen only once during initialization of the network (or any other model), and will remain fixed through training and testing. In order to increase the selectivity of such features, multiple such pooling units are needed with such a randomly initialized support \cite{anselmi2013unsupervised,pal2017mmif}. These multiple pooling units together form the feature that is invariant to multiple transformations simultaneously, which improves generalization as we find in our experiments. This is called heterogeneous pooling and Fig.~\ref{fig_1_b} illustrates this more concretely. We therefore find that permanent random pooling is motivated naturally through the need to attain invariance to multiple transformations simultaneously.

\textbf{The PRC-NPTN layer.}   Fig.~\ref{fig_PRC-NPTN} shows the the architecture of a single PRC-NPTN layer \footnote{We provide pseudo-code in the supplementary}. The PRC-NPTN layer consists of a set of $N_{in}\times G$  filters of size $k\times k$ where $N_{in}$ is the number of input channels and $G$ is the number of filters connected to each input channel. More specifically, each of the $N_{in}$ input channels connects to $|G|$ filters.  Then, a number of channel max pooling units randomly select a fixed number of activation maps to pool over. This is parameterized by Channel Max Pool (CMP). Note that this random support selection for pooling is the reason a PRC-NPTN layer contains a permanent random connectome. These pooling supports once initialized do not change through training or testing. Once max pooling over CMP activation maps completes, the resultant tensor is average pooled across channels with a average pool size such that the desired number of outputs is obtained. After the CMP units, the output is finally fed through a two layered network with the same number of channels with $1\times 1$ kernels, which we call a pooling network. This small pooling network helps in selecting non-linear combinations of the invariant nodes generated through the CMP operation, thereby enriching feature combinations downstream. For experimental rigor, we also benchmark against the baseline ConvNet supplemented with this 1x1 pooling network.


\begin{figure}
    \begin{center}
         \subfigure[Rotation $0^\circ$]{%
        \centering
        \includegraphics[width=0.23\columnwidth,valign=m]{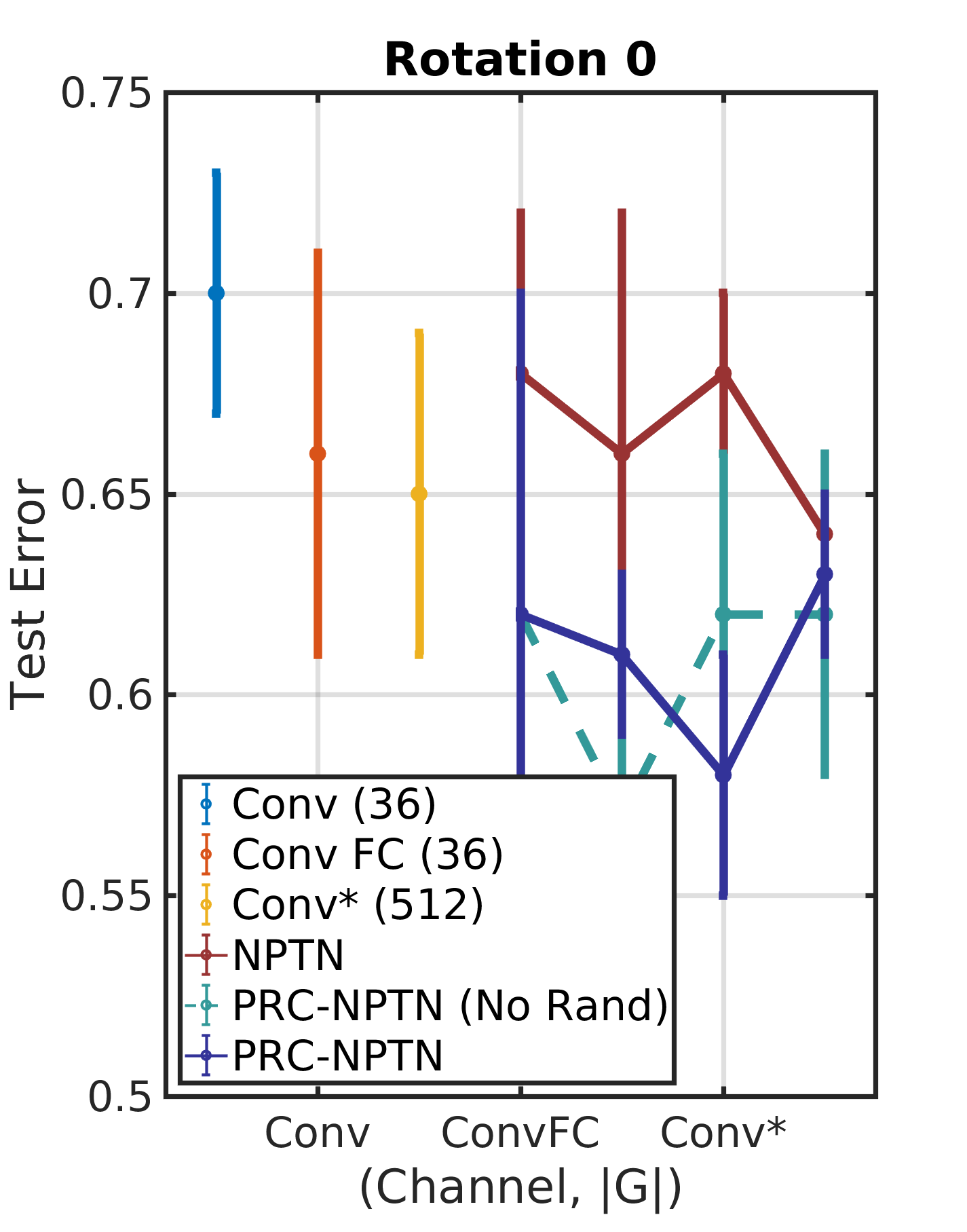}
        }
        \subfigure[Rotation $30^\circ$]{%
        \centering
            \includegraphics[width=0.23\columnwidth,valign=m]{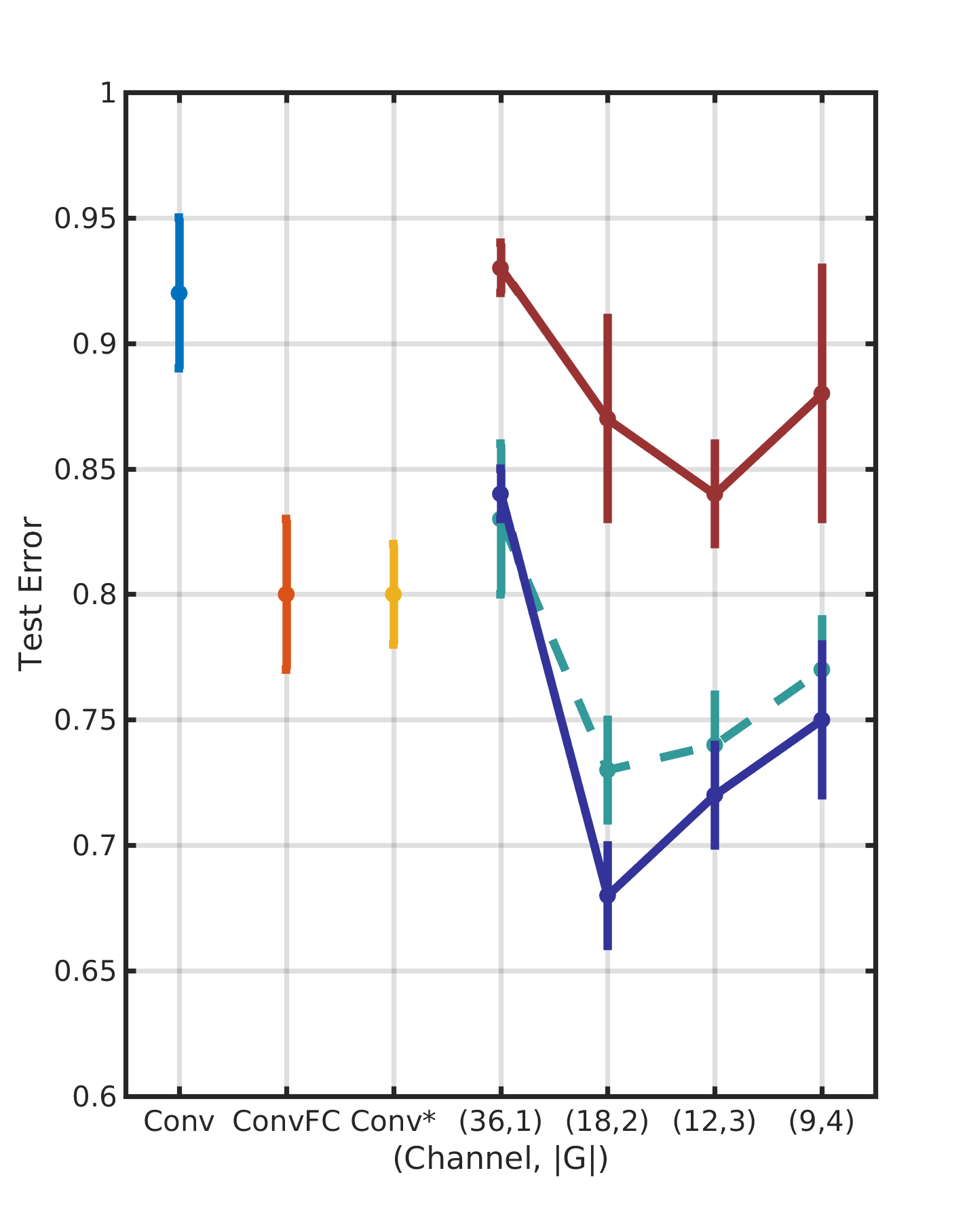}
        }
        \subfigure[Rotation $60^\circ$]{%
        \centering
         \includegraphics[width=0.23\columnwidth,valign=m]{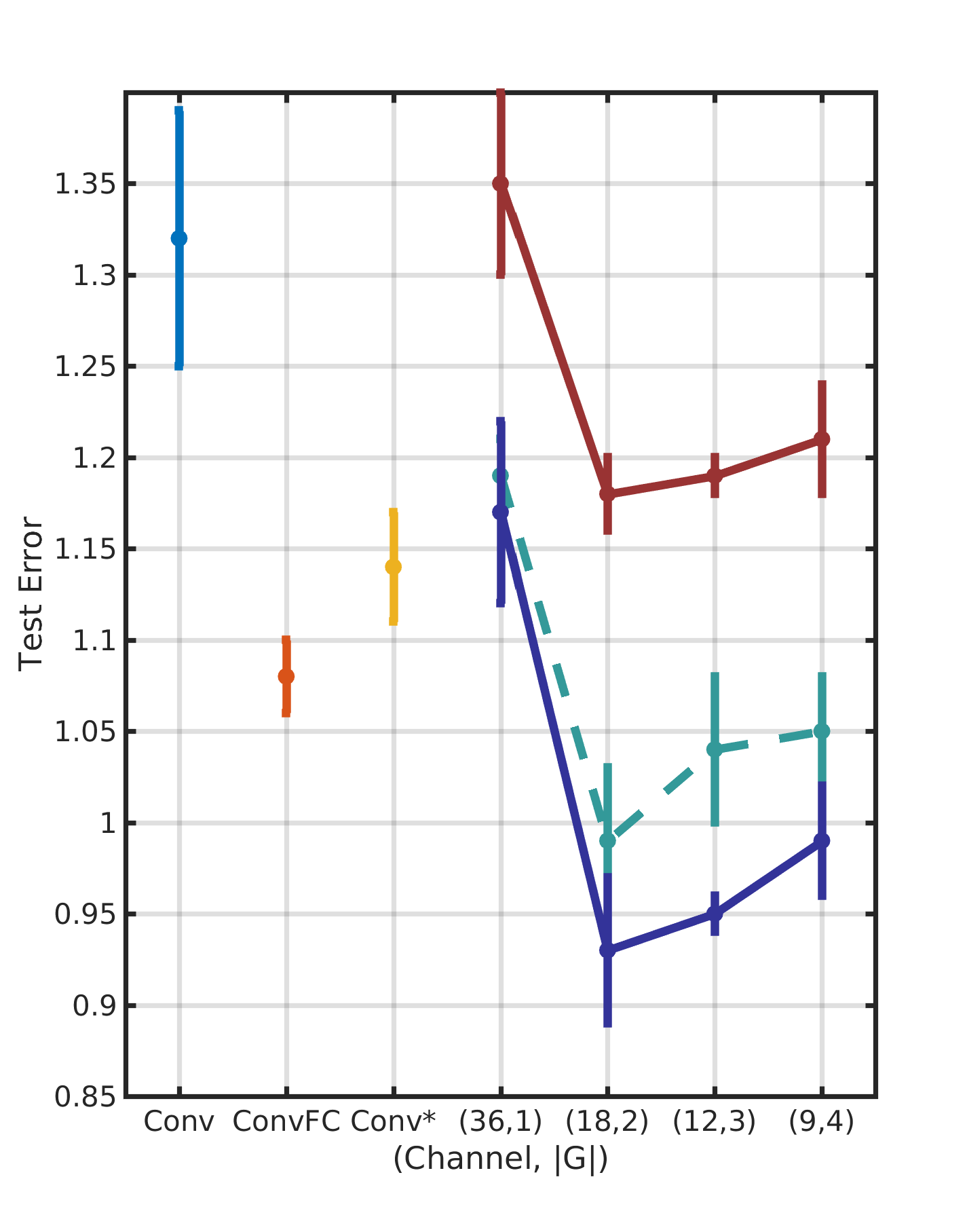}
        }
      \subfigure[Rotation $90^\circ$]{%
        \centering
         \includegraphics[width=0.23\columnwidth,valign=m]{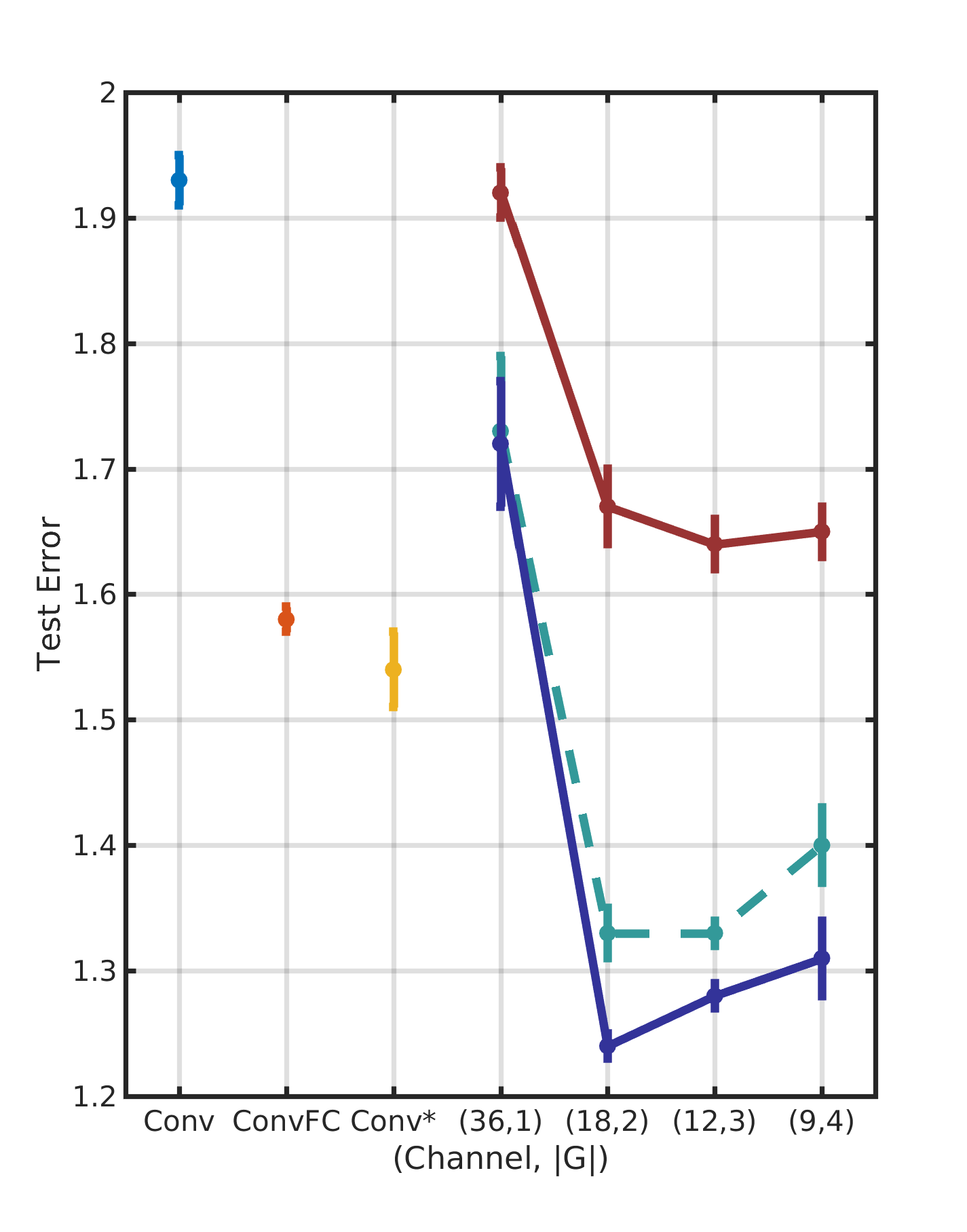}
        }      

    \end{center}
    \vspace{-0.5cm}
\caption{ \textbf{Individual Transformation Results: }Test error statistics with mean and standard deviation on  MNIST with progressively extreme transformations with a) \textbf{random rotations} and b) \textbf{random pixel shifts} (in supplementary). Only for PRC-NPTN and NPTN the brackets indicate the number of channels in the layer 1 and $G$. ConvNet FC denotes the addition of a 2-layered pooling $1\times 1$ pooling network after every layer. Note that for this experiment, CMP=$|G|$. Permanent Random Connectomes help with achieving better generalization despite increased nuisance transformations. We provide experiments on CIFAR 10 using DenseNets in the supplementary. }
\label{fig_minist_indi}

\end{figure}




\section{Empirical Evaluation and Discussion}

\textbf{Goal.} The goal of our evaluation study is to demonstrate PRC-NPTNs as capable of learning transformations from data and  to showcase improvements in generalization in supervised classification over relevant baselines. The goal is not in fact, to compete with the state of the art approaches for any dataset.

\textbf{General Experimental Settings. }  For all experiments, we run all models for 300 epochs trained using SGD. The initial learning rate was kept at 0.1 and decreased by 10 at 50\% and 75\% epoch completion. Momentum was kept at 0.9 with a weight decay of $10^{-5}$. Batch size was kept at 64 for both MNIST and ETH-80 \footnote{We provide experiments on CIFAR-10 in the supplementary.}. For the MNIST experiments, gradients were clipped to norm 1. Each block for all baselines for ConvNet and PRC-NPTN had either a convolution layer or PRC-NPTN layer followed by batch normalization, PReLU and spatial max pooling. The convolutional kernel size for all models was kept at $5 \times 5$ for all MNIST experiments and $3 \times 3$ for all other models. Spatial max pooling of size $3\times 3$ was performed after every layer, BN and PReLU for MNIST models. 

\textbf{Limitations in Typical Implementations and Developing Faster Kernels.} Our implementation with traditional PyTorch still suffered from heavy GPU memory use and slower run times despite optimizing code at the PyTorch abstraction level. The key bottleneck in computational and memory efficiency was found to be the randomized channel pooling operation. The issue was addressed by developing CUDA kernels that performed pooling on non-contiguous blocks of memory without creating copies of the same. This allowed for faster non-contiguous pooling over feature and activation maps with a significant reduction in memory usage. The operation was built as a CUDA-kernel that is interfaced with PyTorch through CuPy. This engineering effort is part of our contribution and we demonstrate significant improvements in memory and computational efficiency in our experiments. We show improvements in memory and speed as different aspects of the PRC-NPTN networks are changed in the supplementary \footnote{We provide details of the architecture, benchmarking  and additional experiments in the supplementary.}. We observe a consistent speed up of atleast 1.5x and a significant reduction in memory usage.





\begin{figure}
    \begin{center}
         \subfigure[($0^\circ$,  $0$ pix)]{%
        \centering
        \includegraphics[width=0.23\columnwidth,valign=m]{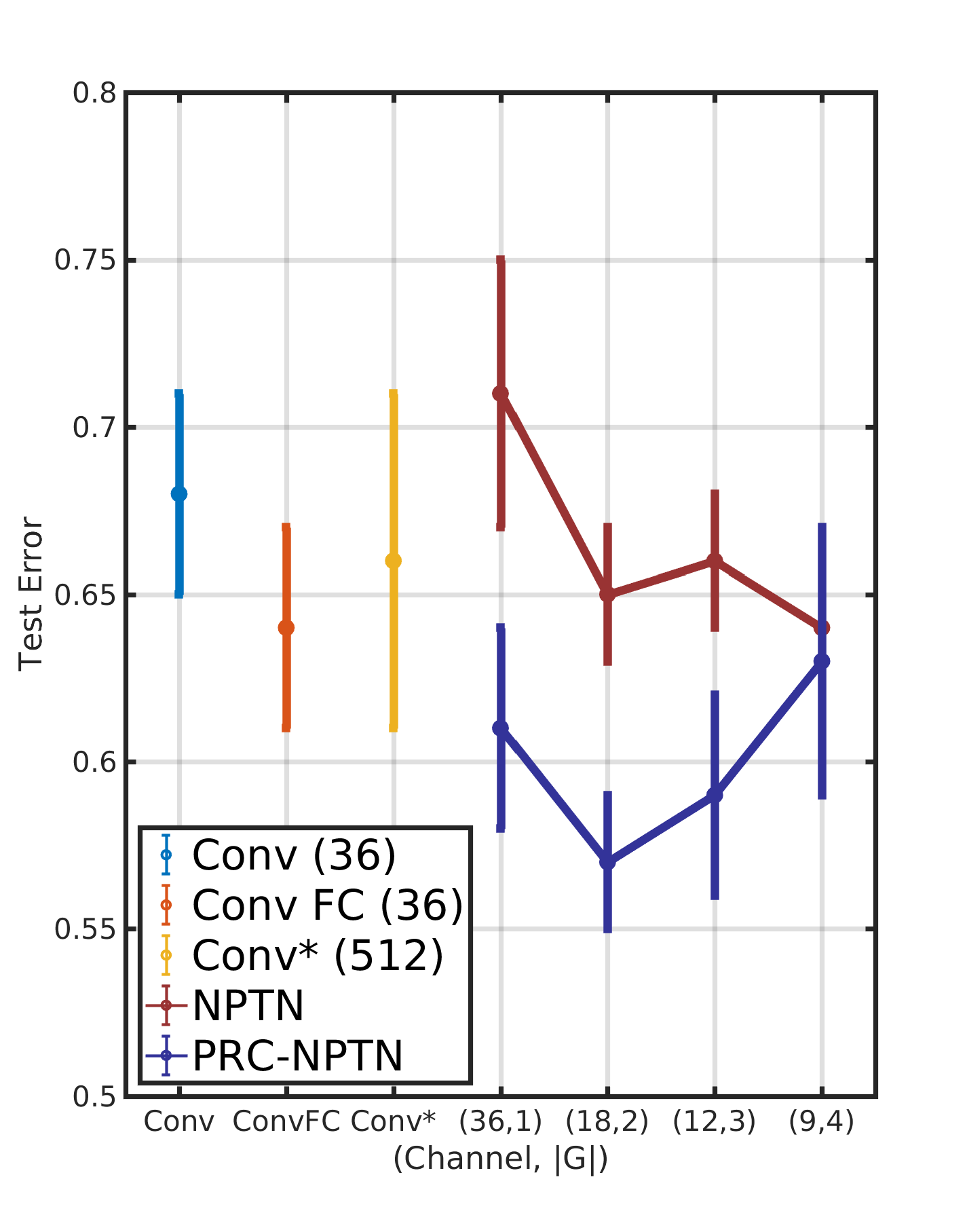}
        }
        \subfigure[($30^\circ$, $4$ pix)]{%
        \centering
            \includegraphics[width=0.23\columnwidth,valign=m]{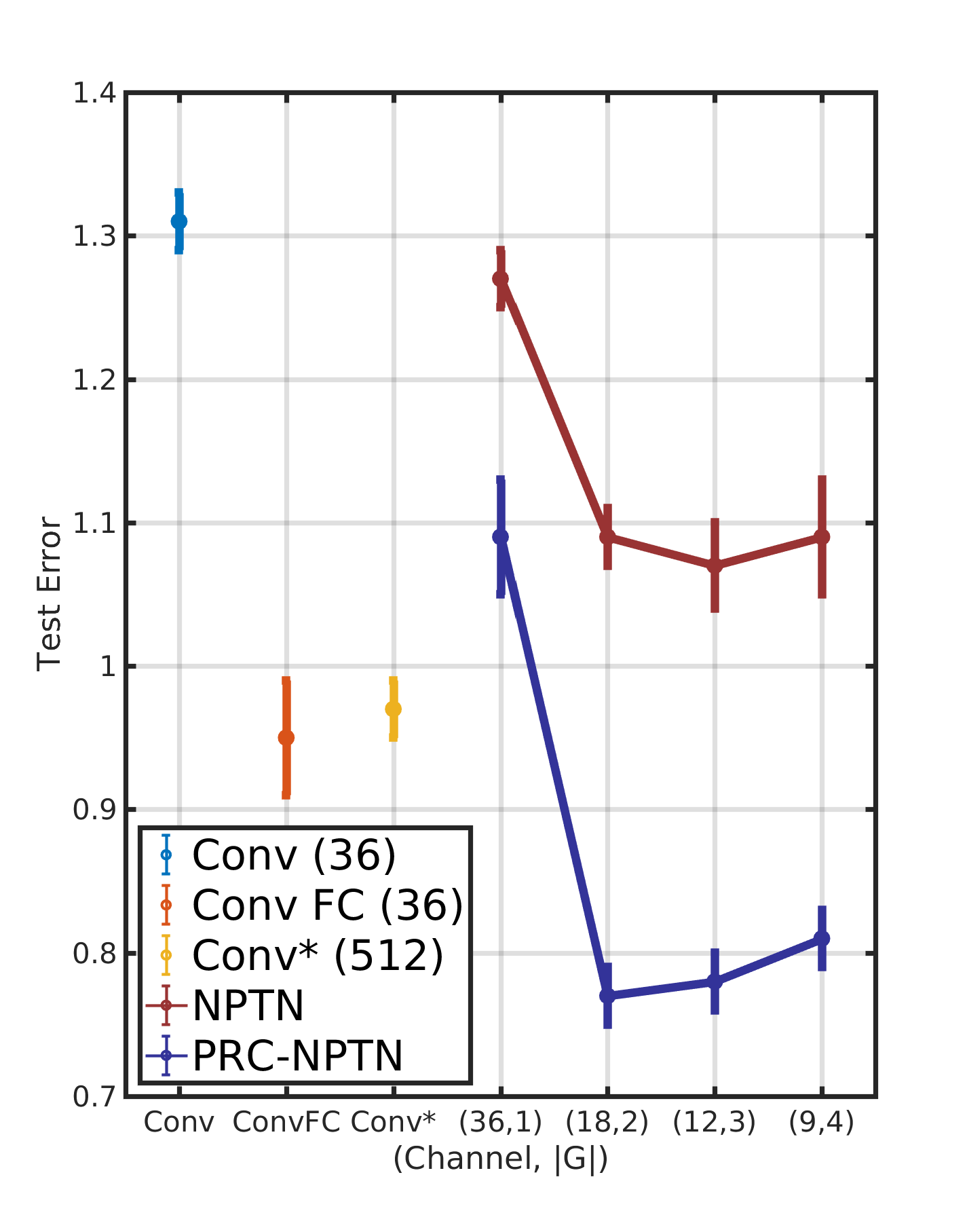}
        }
        \subfigure[($60^\circ$,  $8$ pix)]{%
        \centering
         \includegraphics[width=0.23\columnwidth,valign=m]{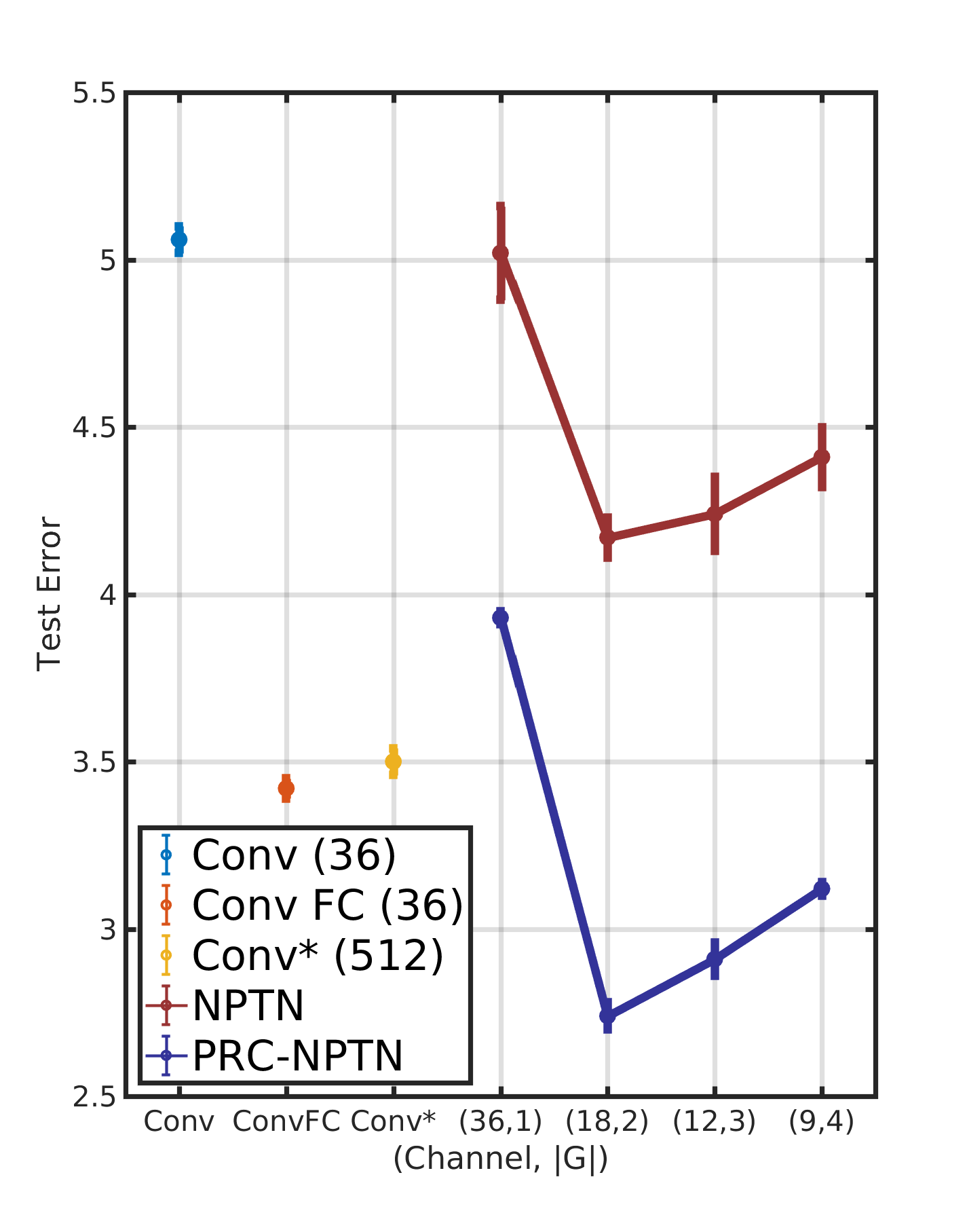}
        }
      \subfigure[($90^\circ$, $12$ pix)]{%
        \centering
         \includegraphics[width=0.23\columnwidth,valign=m]{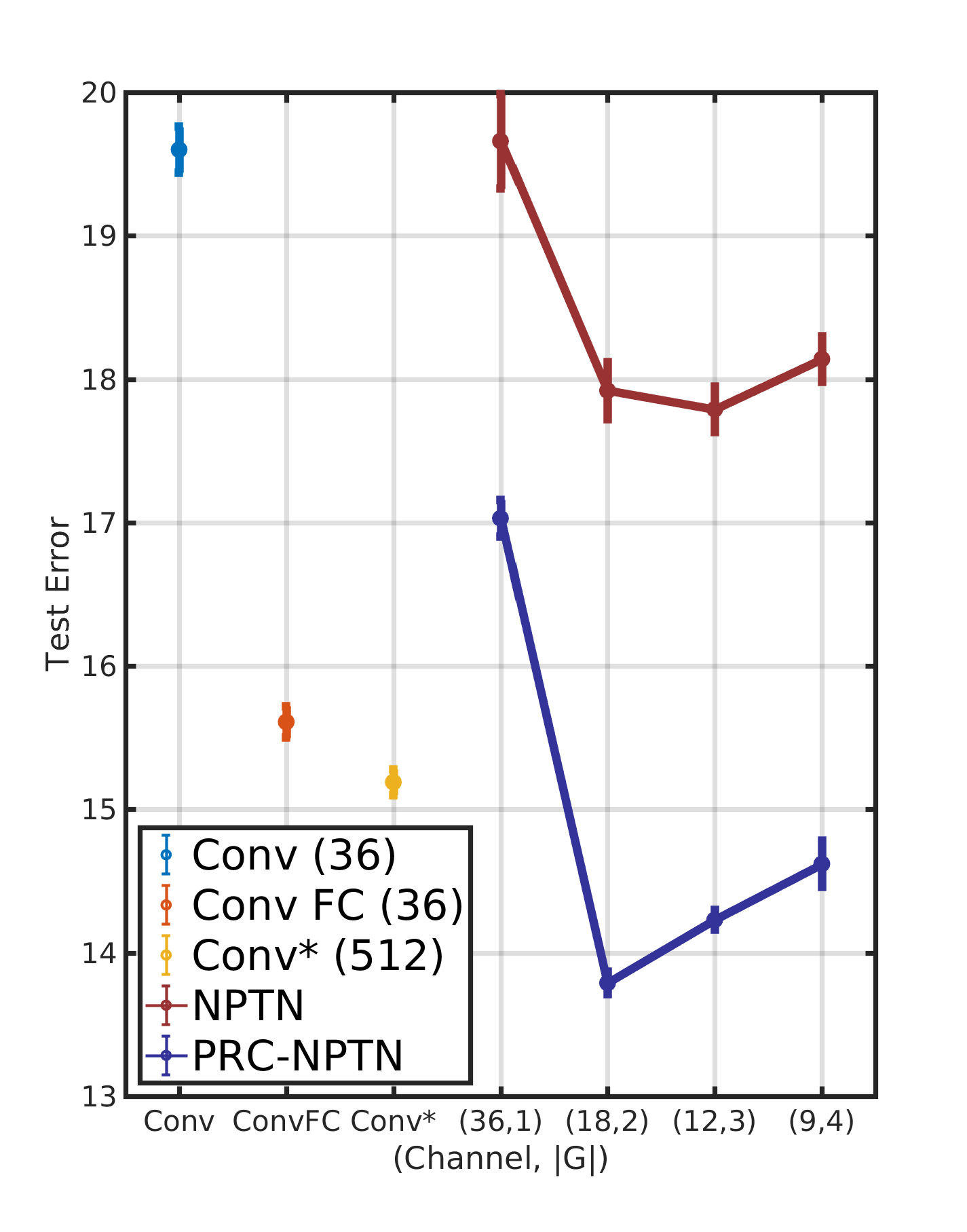}
        }      

    \end{center}
    \vspace{-0.5cm}
\caption{ \textbf{Simultaneous Transformation Results: } Test error statistics with mean and standard deviation on  MNIST with progressively extreme transformations with \textbf{random rotations} and \textbf{random pixel shifts simultaneously}.  For PRC-NPTN and NPTN the brackets indicate the number of channels in the layer 1 and $G$. Note that for this experiment, CMP=$|G|$. We provide experiments on CIFAR 10 with DenseNets in the supplementary.}
\label{fig_mnist_simul}

\end{figure}


\textbf{Efficacy in Learning Arbitrary and Unknown Transformations Invariances from Data.}  We evaluate on one of the most important tasks of any perception system, \emph{i.e.} being invariant to nuisance transformations \textit{learned} from the data itself. Most other architectures based on vanilla ConvNets learn these invariances through the implicit neural network functional map rather than explicitly through the architecture as PRC-NPTNs. Moreover, most previous approaches needed hand crafted architectures to handle different transformations. We benchmark our networks based on tasks where nuisance transformations such as large amounts of in-plane rotation and translation are steadily increased, with no change in architecture whatsoever. For this purpose, we utilize MNIST where it is straightforward to add such transformations without any artifacts. 

 We benchmark on such a task as described in \cite{pal2018nptn} and for fair comparisons, we follow the exact same protocol. We train \textit{and} test on MNIST augmented with progressively increasing transformations \emph{i.e.} \textbf{1)} extreme random translations (up to 12 pixels in a 28 by 28 image), \textbf{2)} extreme random rotations (up to $90^\circ$ rotations) and finally \textbf{3)} both transformations simultaneously. \textit{Both} train and test data were augmented randomly for every sample leading to an increase in overall complexity of the problem. No architecture was altered in anyway between the two transformations \emph{i.e.} they were not designed to specifically handle either. The same architecture for all networks is expected to learn invariances directly from data unlike prior art where such invariances are hand crafted in \cite{teney2016learning,li2017deep,sifre2013rotation,xu2014scale,cohen2016group,henriques2016warped}. 

For this experiment, we utilize a two layered network with the intermediate layer 1 having up to 36 channels and layer 2 having exactly 16 channels for all networks (similar to the architectures in \cite{pal2018nptn}) except a wider ConvNet baseline with 512 channels. All ConvNet, NPTN and PRC-NPTN models have the similar number of parameters (except the ConvNet with 512 channels). For PRC-NPTN, the number of channels in layer 1 was decreased from 36, through to 9 while $|G|$ was increased in order to maintain similar number of parameters. All PRC-NPTN networks have a two layered $1\times 1$ pooling network with same number of channels as that layer. For a fair benchmark, Convnet FC has 2 two-layered pooling networks with 36 channels each. Average test errors are reported over 5 runs for all networks.


\begin{table*}
\small
\centering
\begin{tabular}{l c } 
\hline
\hline 
Architecture   &   \\
\hline
\hline
ConvNet B    &  C(12) - C(24) - C(48) - C(48)  - GAP - FC(300) - FC(200) - FC(8)\\
NPTN-large B &  C(12) - NPTN(24) - NPTN(48) - NPTN(48) - GAP - FC(300) - FC(200) - FC(8)    \\ 
NPTN-small B &  C(12)  - NPTN(8)  - NPTN(24)  - NPTN(48) - GAP - FC(300) - FC(200) - FC(8)    \\
PRC-NPTN B &  C(12)  - PRC(24) - PRC(48)  - PRC(48) - GAP - FC(300) - FC(200) - FC(8) \\
\hline
ConvNet C  &   C(12) - C(24)  - C(48)  - C(64) - C(128)  - GAP - FC(8) \\

PRC-NPTN C  &   C(12) - PRC(24) - PRC(48) - PRC(64) - PRC(128) - GAP - FC(8) \\

\hline
\end{tabular}
\caption{\textbf{Architectures tested on ETH-80.} C - convolution layer, FC - fully connected layer, PRC - PRC-NPTN layer, NPTN - NPTN layer, GAP - global average pooling layer. Every Conv, NPTN and PRC-NPTN layer was followed by a spatial pooling layer of kernel size 2 except the last laer before the GAP.  The $1\times 1$ versions of these architectures have a $1\times$ conv layer after every $3\times 3$ layer except the first C(12) layer. ConvNet B was designed to be similar to the architecture explored in \cite{khasanova2017graph} however with more layers. ConvNet C was designed to be more aligned with modern architecture choices such as global average pooling followed by just one FC layer. }
\label{tab_exp_eth_arch}
\end{table*}

\textbf{Discussion.} We present all test errors for this experiment in Fig.~\ref{fig_minist_indi} and Fig.~\ref{fig_mnist_simul}\footnote{We display only the (12, 3) configuration for NPTN as it performed the best. The translation results and more benchmarks with NPTNs are provided in the supplementary. We obtain similar perforamnce improvements with extreme translation as well.}. From both figures, it is clear that as more nuisance transformations act on the data, PRC-NPTN networks outperform other baselines with the same number of parameters. In fact, even with significantly more parameters, ConvNet-512 performs worse than PRCN-NPTN on this task for all settings. Since the testing data has nuisance transformations similar to the training data, the only way for a model to perform well is to learn invariance to these transformations.  It is also interesting to observe that permanent \textit{random} connectomes do indeed help with generalization. Indeed, without randomization the performance of PRCN-NPTNs drop substantially. The performance improvement of PRC-NPTN also increases with nuisance transformations, showcasing the benefits arising from modelling such invariances. This is particularly apparent from Fig.~\ref{fig_mnist_simul}, where the two simultaneous nuisance transformations pose a significant challenge. Yet, as the transformations increase, the performance improvements increase as well. 


\begin{table*}
\centering
\begin{tabular}{l c c c c c} 
\hline
\hline 
Method (Protocol 1)   &   Accuracy (\%)   &   \#params  &  Factor  &  \#filters  &  Reduction   \\
\hline
\hline
ConvNet \cite{khasanova2017graph}  &    $93.69$     &    1.4M     &    &  230   &  - \\

NPTN* \cite{pal2018nptn}     &  $ 96.2$    &  1.4M       &      &    230     &  -  \\

\hline
ConvNet B      &   $95.61$    &  110K       & $1\times$     &    3780    & $1\times$  \\
ConvNet B ($1\times 1$)      &   $94.54$    &  115K       &  $0.95\times$   &   3780   & $1\times$  \\ 
NPTN-large B (3)      &   $94.63$    &  189K       &   $0.58\times$  &   11268     & $0.33\times$  \\
NPTN-small B (3)      &   $95.09$    &  120K       &   $0.91\times$  &   4356     & $0.86\times$  \\
PRC-NPTN B  (8, 2)      &   $\mathbf{96.72}$    &  \textbf{97K}      &   $\mathbf{1.13\times}$   &    \textbf{708}     & $\mathbf{5.33 \times}$  \\

\hline
ConvNet C      &   $93.90$    &  116K       &   $1\times$  &   12740     & $1\times$  \\
ConvNet C ($1\times 1$)      &   $95.98$    &  138K        &  $0.84\times$   &   12740     &  $1\times$  \\

PRC-NPTN C (8, 2)      &   $95.93$    &  64K       &   $1.81\times$  &  \textbf{1220}   &  $\mathbf{10.44\times}$  \\  
PRC-NPTN C (8, 4)      &   $\mathbf{96.40}$    &  \textbf{39K}      &  $\mathbf{2.97\times}$    &   \textbf{1220}   &   $\mathbf{10.44\times}$   \\  
\hline
\end{tabular}
\caption{\textbf{Test accuracy on ETH-80 Protocol 1.} $*$ indicates the result was obtained from the corresponding paper, and is not on the split used for our experiments. For NPTN is number in the bracket denotes $|G|$, for PRC-NPTN the numbers denote $|G|$ and CMP respectively.}
\label{tab_exp_eth}
\end{table*}



\textbf{Evaluation on the ETH-80 dataset} The ETH-80 dataset was introduced in \cite{leibe2003analyzing} as a benchmark to test models against 3D pose variation of various objects. The dataset contains 80 objects belonging to 8 different classes. Each object has images from different viewpoints on a hemisphere for a total of 41 images per object. The images were resized to $50\times 50$ following \cite{khasanova2017graph}. This dataset is perfectly poised to test how efficiently a model can learn invariance to 3D viewpoint variation.

\textbf{Protocol:} For this experiment, we follow the evaluation protocol as described in \cite{khasanova2017graph} \footnote{We also present results on a harder protocol we devised in the supplementary}. We randomly select 2,300 images to train and test on the rest. For a fair comparison we retrain the ConvNet described in \cite{khasanova2017graph}. We design two ConvNet architectures which reflect more modern architecture choices such as a smaller FC layer or having only a global average pooling after a number of conv layers. Table.~\ref{tab_exp_eth_arch} presents the architectures that we train for this experiment. Every conv layer (except the first conv layer \textit{within} a PRC-NPTN layer) is followed by BatchNorm and ReLU. We train corresponding PRC-NPTN models that have fewer parameters. For these experiments with PRC-NPTN, we replace the average pooling across channels (not max pooling or CMP) with a $1\times 1$ convolution layer. We do this to explore the effect of weighted pooling instead of vanilla channel average pooling. To maintain a fair comparison, we compare against equivalent ConvNet baselines with an extra $1\times 1$ added. We also perform an ablation study with the randomization removed. All models were trained with Adam with a learning rate of 0.01 for 100 epochs and a batch size of 64. Each architecture was trained 10 separate times with the mean of the runs being reported. We showcase the results in Table.~\ref{tab_exp_eth}.

\textbf{Discussion:} We find that of the two different types of architectures that we explore, PRC-NPTNs outperform both corresponding ConvNet architectures. Further, they do so not only with fewer number of parameters, but also fewer number of $3\times 3$ filters. In fact, PRC-NPTN C for $|G|=8$ and CMP=4 outperforms the corresponding ConvNet C architectures with a $2.97\times$ reduction in the number of parameters and a $10.44\times$ reduction in the number of $3\times 3$ convolution filters. Similarly, PRC-NPTNs outperform two architectures of NPTNs \cite{pal2018nptn} both with significantly fewer parameters and $3\times 3$ filters. These results illustrate that  PRC-NPTN can utilize filters and parameters more efficiently on a classification problem which requires 3D pose invariance. This efficiency we conjecture, is due to the fact that permanent random pooling results in an inductive bias that explicitly helps the learning of multiple invariances within the same layer thereby vastly increasing model capacity. Note that the almost 3X reduction in the number of parameters and 10X reduction in the number of filters is achieved without the use of any network pruning or post processing methods. Note that competing methods presented in \cite{khasanova2017graph} all perform comparably however with 1.4 million parameters each with the highest result being TIGradNet \cite{khasanova2017graph} at 95.1, HarmNet at 94.0 \cite{worrall2017harmonic}. PRC-NPTN outperforms these methods which were designed to invoke invariances through inductive biases while using a fraction of the number of parameters.

\newpage


\section{Appendix}
\textbf{Abstract:} In this supplementary material, we provide a proof for Lemma 2.1 in the main paper, a more complete table of results including different parameter comparisons of NTPN baselines, timing results of our more efficient CUDA implementation and the pseudocode for PRCN-NPTNs. Further, we present results on the dataset ETH-80 consisting of 3D viewpoint variations of objects in comparison with previous work. Finally, we present results on CIFAR 10 with vanilla DenseNets and PRCN applied to DenseNets to form DensePRC-NPTNs.

\textbf{Prior Art using Alternate Architectures.}  Several works have explored alternate deep layer architectures. A few of the main developments were the application of the skip connection \cite{he2016deep}, depthwise separable convolutions \cite{chollet2017xception} and group convolutions \cite{xie2017aggregated}. Randomly initialized channel shuffling is an operation that is central to the application of permanent random connectomes. However, \textit{deterministic} non-randomized channel shuffling was explored while optimizing networks for computation efficiency \cite{zhang2018shufflenet}. Nonetheless, none of these methods explored \textit{permanent} and \textit{ random } connectomes from the perspective of explicitly learning invariances from data itself. 

\textbf{Invariances in a PRC-NPTN layer.}  Recent work introducing NPTNs \cite{pal2018nptn} had highlighted the Transformation Network (TN) framework in which invariance is generated during the forward pass by pooling over dot-products with transformed filter outputs. A vanilla convolution layer with a single input and output channel (therefore a single convolution filter) followed by a $k\times k$ \textit{spatial} pooling layer can be seen as a single TN node enforcing translation invariance with the number of filter outputs being pooled over to be $k\times k$. It has been shown that $k\times k$ spatial pooling over the convolution output of a single filter is an approximation to channel pooling across the outputs of $k\times k$ translated filters \cite{pal2018nptn}. The output $\Upsilon(x)$ of such an operation with an input patch $x$ can be expressed as 
\begin{align}
    \Upsilon(x) =  \max_{g\in \mathcal{G}}\langle  x, gw  \rangle
\end{align}
where $\mathcal{G}$ is the set of filters whose outputs are being pooled over. Thus, $\mathcal{G}$ defines the set of transformations and thus the invariance that the TN node enforces. In a vanilla convolution layer, this is the translation group (enforced by the convolution operation followed by \textit{spatial} pooling).  An NPTN removes any constraints on $\mathcal{G}$ allowing it to approximately model arbitrarily complex transformations. A vanilla convolution layer would have \textit{one} filter whose convolution is pooled over spatially (for translation invariance). In contrast, an NPTN node has $|\mathcal{G}|$ \textit{independent} filters whose convolution outputs are pooled across \textit{channel} wise leading to general invariance.

A PRC-NPTN layer inherits the property from NPTNs to learn arbitrary transformations and thereby arbitrary invariances using $\mathcal{G}$. Individual channel max pooling (CMP) nodes act as NPTN nodes sharing a \textit{common} filter bank as opposed to independent and disjoint filter banks for vanilla NPTNs. This allows for greater activation sharing, where transformations learned from data through one subset of filters can be used for invoking similar invariances in a parallel computation path. This sharing and reuse of activation maps allows for higher parameter and sample efficiency. As we find in our experiments, randomization plays a critical role here, allowing for a simple and quick approximation to obtaining high performing invariances. A high activation map can activate multiple CMP nodes, winning over multiple sub-sets of low activations. Gradients flow back to these winning activations updating the filters to further model the features observed during that particular batch. Note that, CMP nodes in the same layer can pool over disjoint subsets to invoke a variety of invariances, leading to a more versatile network and also better modelling of a particular kind of invariance as we find in our experiments. Further, the primary source of invoking invariances in NPTN was understood to be the symmetry of the unitary group action space \cite{pal2018nptn}. General invariances were assumed to be only approximately forming a group. Lemma~\ref{lem_invariance} shows that group symmetry is not necessary to reduce variance of the quantity $\max \Upsilon(x)$ due to the action of the set elements $g$ on some test input patch $x$. Though, the result makes a strong assumption regarding the distribution of $\Upsilon(x)$, it to the best of our knowledge the first result of its kind to show increased invariance without a group symmetric action.

\begin{figure}
    \begin{center}
         \subfigure[Depth]{%
        \centering
        \includegraphics[width=0.2\columnwidth,valign=m]{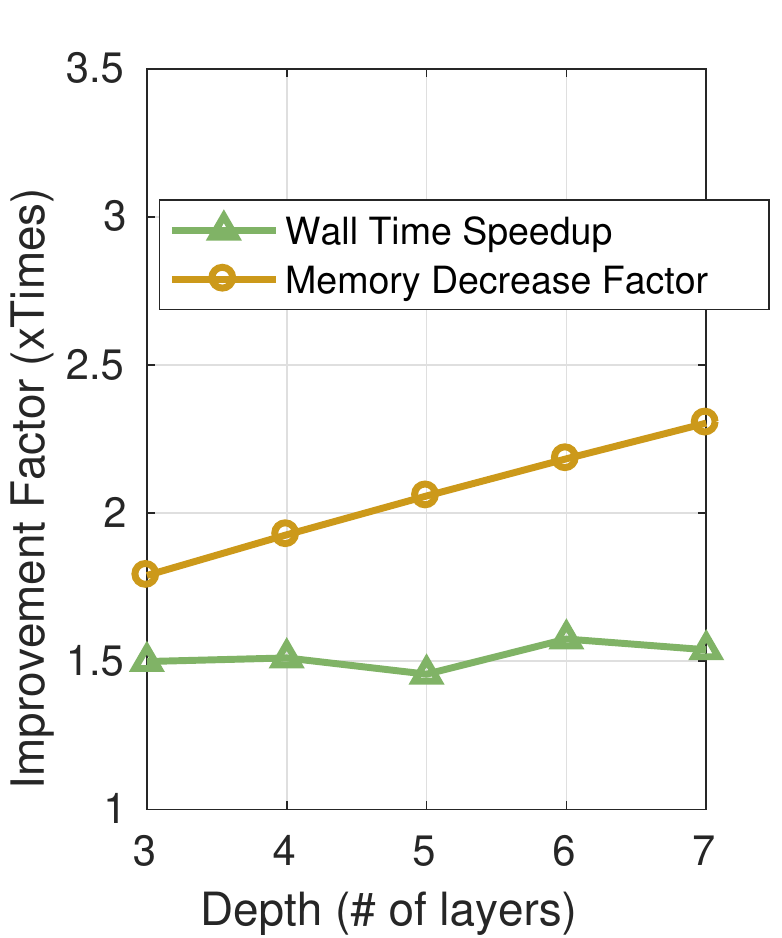}
        }
        \subfigure[CMP]{%
        \centering
            \includegraphics[width=0.2\columnwidth,valign=m]{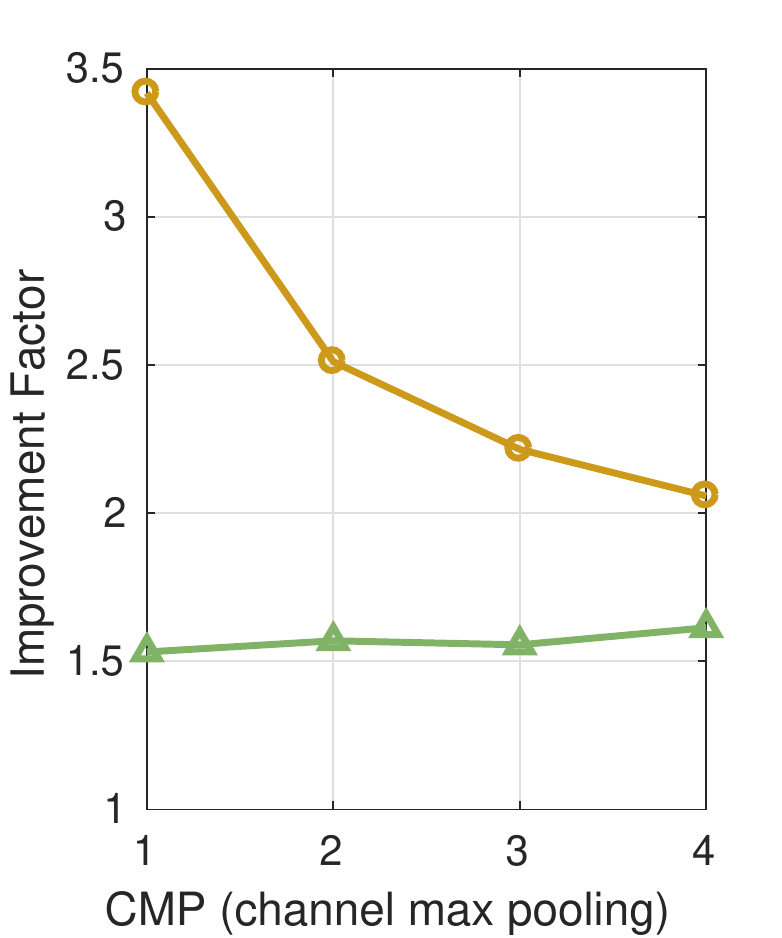}
        }
        \subfigure[Width]{%
        \centering
         \includegraphics[width=0.2\columnwidth,valign=m]{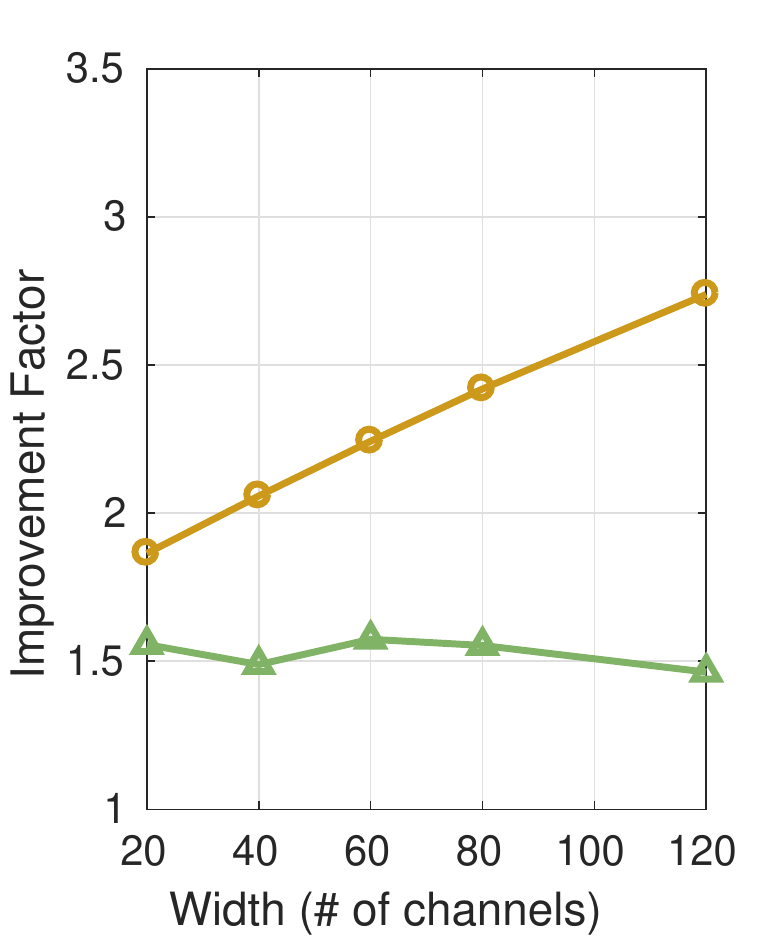}
        }
      \subfigure[Growth Factor]{%
        \centering
         \includegraphics[width=0.2\columnwidth,valign=m]{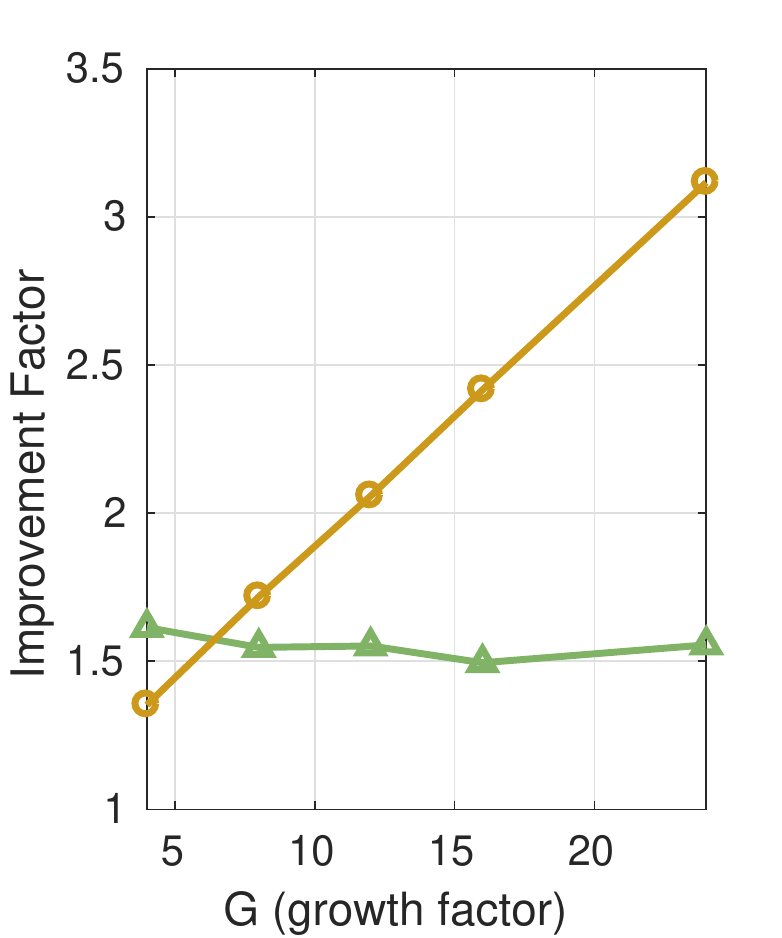}
        }      

    \end{center}
    \vspace{-0.5cm}
\caption{ Computational efficiency improvements of our CUDA kernel implementations.  }
\label{fig_timing}
    \vspace{-0.2cm}

\end{figure}

\textbf{Efficacy on CIFAR10 Image Classification.}  MNIST was a good candidate for the previous experiment where the addition of nuisance transformations such as translation and rotation did not introduce any artifacts. However, in order to validate permanent random connectomes on more realistic data, we utilize the CIFAR10 dataset and AutoAugmentation \cite{cubuk2018autoaugment} as the nuisance transformation. Note that, from the perspective of previous works in network invariance, it is unclear how to hand craft architectures to handle invariances due to the variety of transformations that AutoAugment invokes. Here is where the general invariance learning capability of PRC-NPTNs would help, without the need of expertise in such hand-crafting. 

We replace vanilla convolution layers with kernel size 3 in DenseNets with PRC-NPTNs without the 2-layered pooling networks. There was another modification for this experiment. For each input channel of a layer, a total of $|G|=12$ filters were learnt. However only a few of them were pooled over (channel max pool or CMP). We pool with CMP = 1, 2, 3 or 4 channels randomly keeping $|G|=12$ fixed always. Note that in contrast with the MNIST experiment, pooling was always done over $|G|$ number of channels (CMP=$|G|$). This provides a different setting under which PRC-NPTN can be utilized. All models in this experiment were trained with AutoAugment and were tested on both a) the original testing images and also on b) the test set transformed by AutoAugment. Similarly to the previous experiment, a model would have learn invariance towards these auto-augment transformations in order to perform well. All DenseNet models have 12 layers with the PRC-NPTN variant having the same number of parameters to enable us to perform multiple runs in a reasonable amount of time. The lower accuracy compared to other studies can be accounted by this. We train 5 models for each setting and report the mean and standard deviation of the errors. Training 5 runs for each of the hyperparameter combination to account for the randomization is yet another reason which tended to result in unreasonably large experiment times. Importantly, the goal of this experiment is not to push the state-of-the-art, but rather to investigate the behavior of DensePRC-NPTNs within the limits of computational resources available for this study while executing 5 runs for each network.  

\textbf{Discussion.}  Table.~\ref{tab_exp_cifar10} presents the results of this experiment. We find PRC-NPTN provides clear benefits even with architectures employing heavy use of skip connections such as DenseNets with the same number of parameters. Performance seems to increase as channel max pooling increased. Further, randomization seems to be important to the overall architecture even when given the complex nature of real image transformations. PRC-NPTN helps DenseNets account for nuisance transformations better even for those as extreme as auto-augment with its 16 transformation types  ShearX/Y, TranslateX/Y, Rotate, AutoContrast, Invert, Equalize, Solarize, Posterize, Contrast, Color, Brightness, Sharpness, Cutout, Sample Pairing to various degrees.  With these evidence, it is interesting to find that random connectomes can be motivated from the perspective of learning heterogeneous invariances from data without any change in architectures. We find that they provide a promising alternate dimension in future network design in contrast to the ubiquitous use of highly structured and ordered connectomes.

\begin{figure}
    \begin{center}
         \subfigure[Trans $0$ pix]{%
        \centering
        \includegraphics[width=0.23\columnwidth,valign=m]{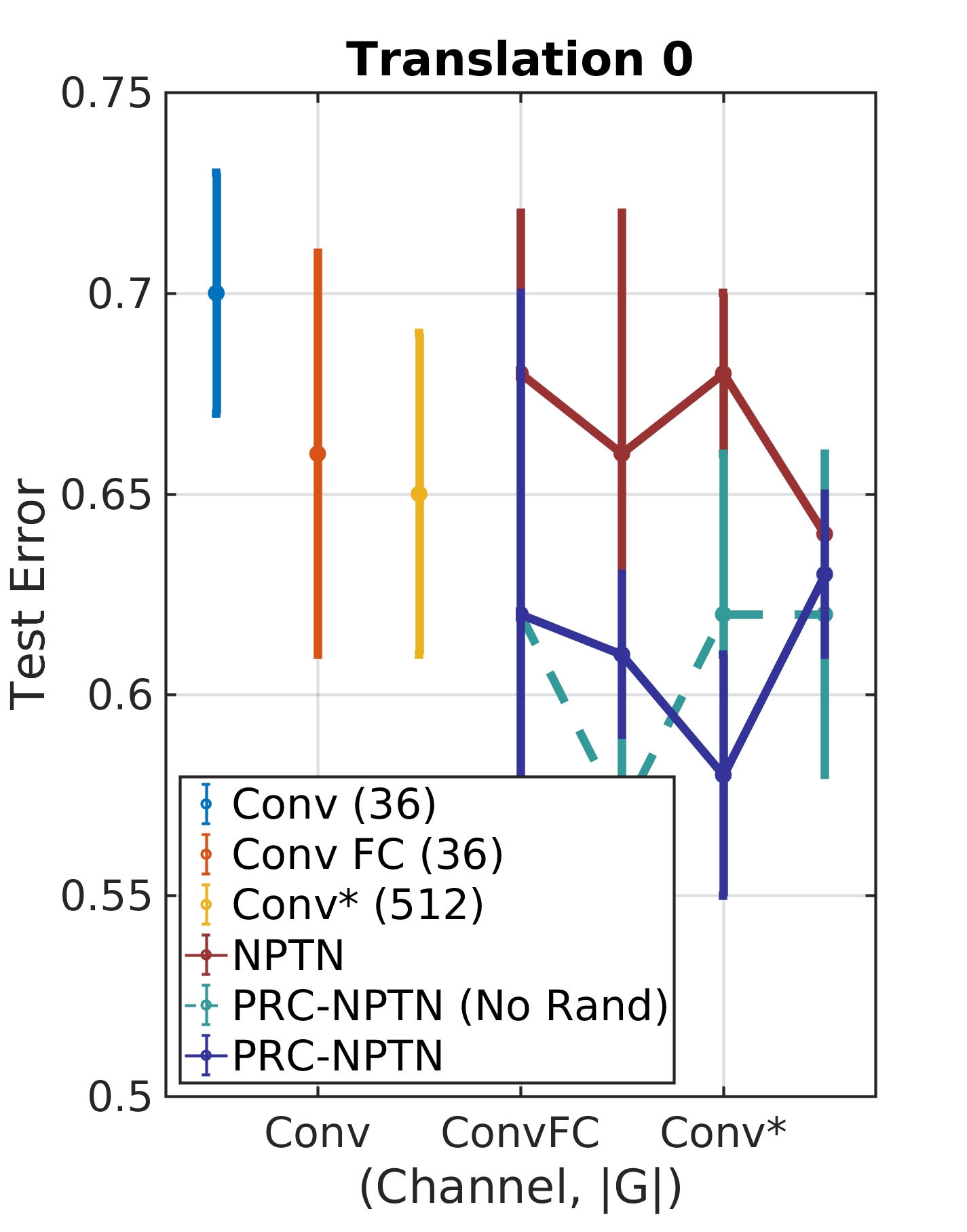}
        }
        \subfigure[Trans $4$ pix]{%
        \centering
            \includegraphics[width=0.23\columnwidth,valign=m]{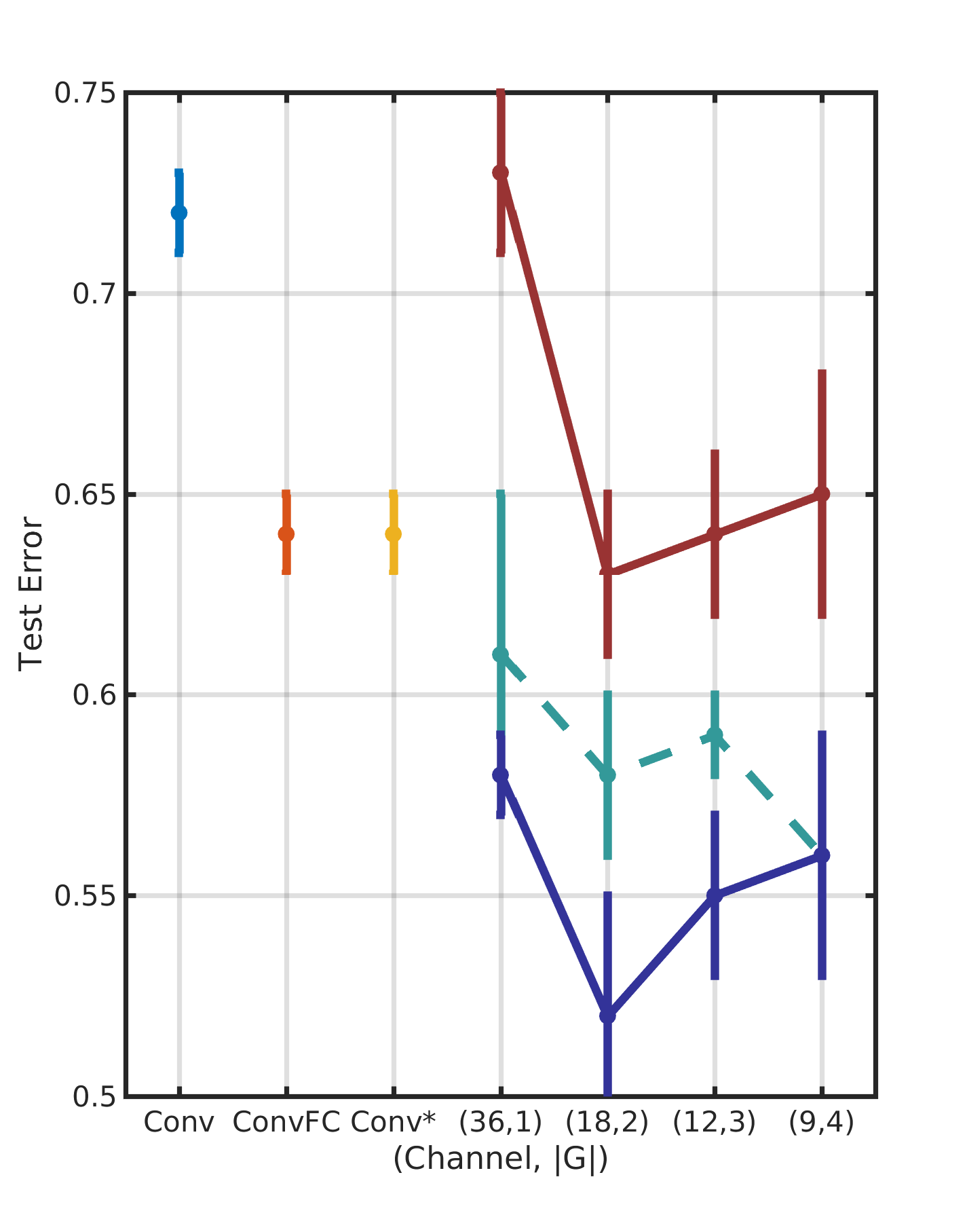}
        }
        \subfigure[Trans $8$ pix]{%
        \centering
         \includegraphics[width=0.23\columnwidth,valign=m]{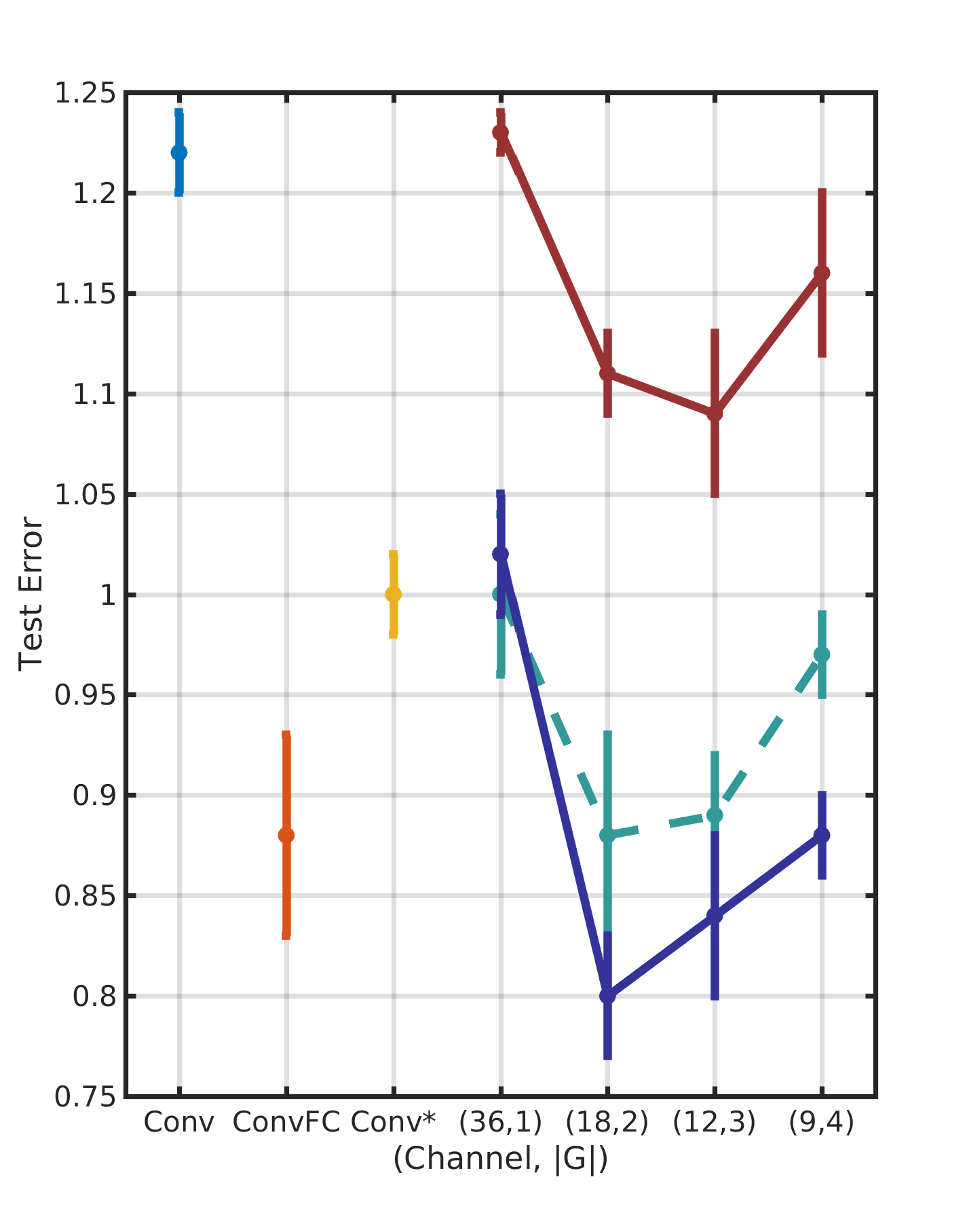}
        }
      \subfigure[Trans $12$ pix]{%
        \centering
         \includegraphics[width=0.23\columnwidth,valign=m]{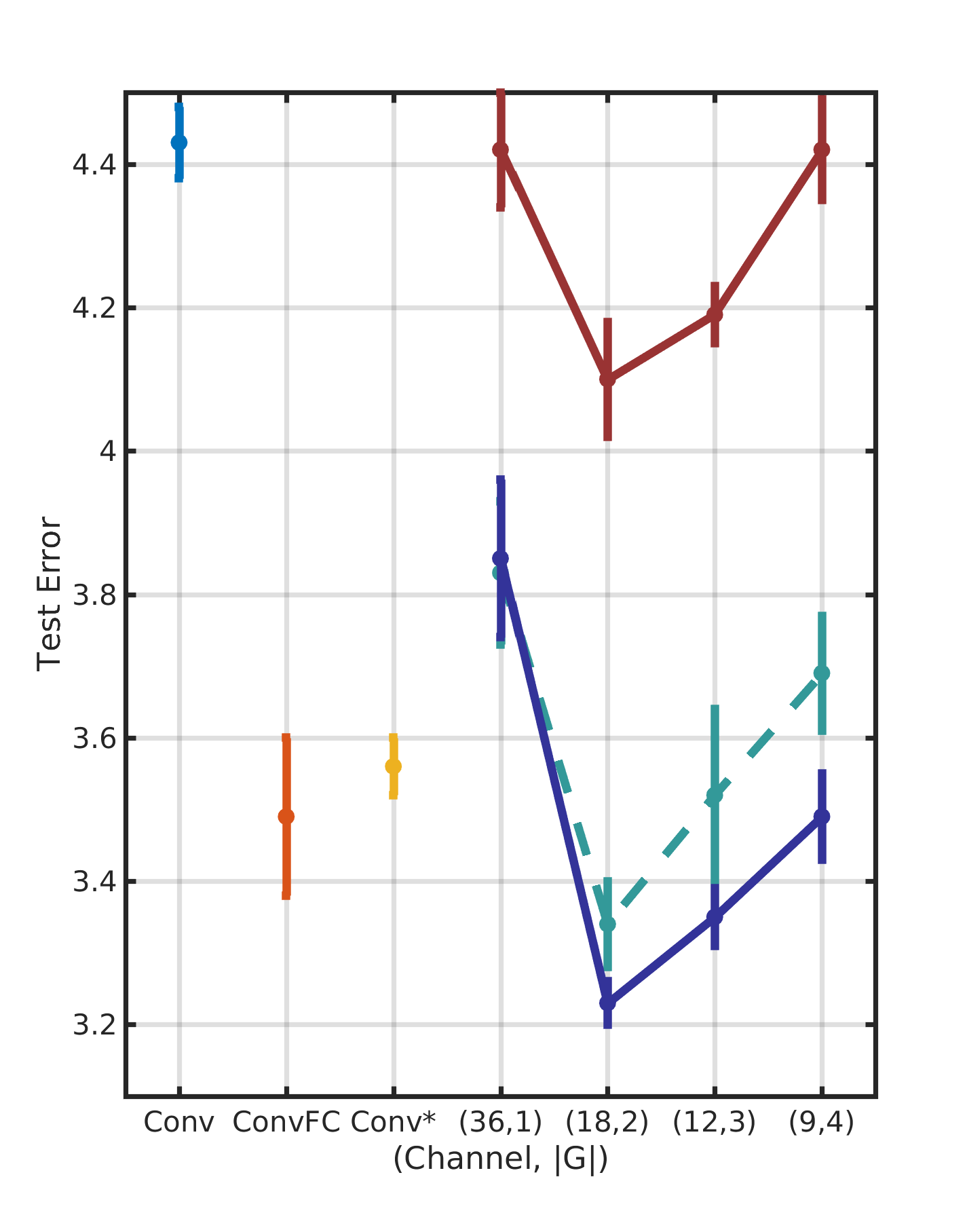}
        }      

    \end{center}
    \vspace{-0.5cm}
\caption{ \textbf{Individual Transformation Results: }Test error statistics with mean and standard deviation on  MNIST with progressively extreme transformations with \textbf{random pixel shifts}. For PRC-NPTN and NPTN the brackets indicate the number of channels in the layer 1 and $G$. ConvNet FC denotes the addition of a 2-layered pooling $1\times 1$ pooling network after every layer. Note that for this experiment, CMP=$|G|$. Permanent Random Connectomes help with achieving better generalization despite increased nuisance transformations. }
\label{fig_minist_indi}

\end{figure}

\begin{table*}
\tiny
\centering
\begin{tabular}{l | c | c  | c | c | c | c  } 
\hline\hline 
Method   &   CIFAR10  &  (w/o Random)  &    CIFAR 10++   &    (w/o Random)   &   Speed  &  Memory     \\
\hline
\hline



DenseNet-Conv  &    $11.47_{\pm 0.19} $  &  -       &    $21.37_{\pm 0.29} $    &   -   &       &       \\
\hline
DensePRC-NPTN (CMP=1)    &    $ 11.82 _{\pm 0.20 }$   &    $  13.33  _{\pm 0.23  } $      &$22.03  _{\pm   0.08 }$     &      $ 23.88   _{\pm 0.38  } $   &    1.29x   &     2.92x   \\

DensePRC-NPTN (CMP=2)     &   $10.78 _{\pm 0.31}$     &      $  11.67  _{\pm 0.36 } $     &    $20.71 _{\pm 0.23}$   & $ 21.90 _{\pm 0.33  } $     &    1.34x   &    1.96x    \\
DensePRC-NPTN (CMP=3)     &   $10.95 _{\pm 0.12}$     &      $  11.59  _{\pm 0.23  } $     &    $20.95 _{\pm 0.20}$    &       $  21.80  _{\pm 0.42  } $     &  1.36x     &     1.64x   \\
DensePRC-NPTN (CMP=4)    &    $\mathbf{10.61 _{\pm 0.11}}$   &    $  11.41  _{\pm 0.12  } $      &        $\mathbf{20.80 _{\pm 0.12}}$     &      $  21.47  _{\pm  0.16 } $    & 1.36x      &   1.48x     \\

\hline
\end{tabular}
\caption{\textbf{Efficacy on CIFAR10:} Test error statistics on CIFAR10 with mean and standard deviation. ++ indicates AutoAugment testing. Each DenseNet and its corresponding PRC-NPTN variant has the same number of parameters. $|G|=12$ for PRC-NPTN and growth rate was kept at 12 for DenseNet-Conv. (w/o Random) indicates no randomization in the connectomes constructed (as an ablation study). The speed and memory improvements are multiplicative improvement factors of our CUDA kernel implementation compared to baseline optimized PyTorch code. }
\label{tab_exp_cifar10}
\vspace{-0.6cm}
\end{table*}


\begin{figure}
    \begin{center}
            \includegraphics[width=0.95\columnwidth,valign=m]{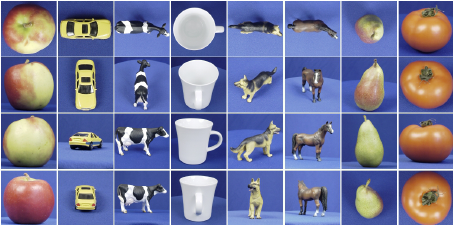}
    \end{center}
    \vspace{-0.2cm}
\caption{ Sample images from the ETH-80 database. The dataset contains 80 objects belonging to 8 different classes. Each object has images from different viewpoints on a hemisphere resulting in 3D pose and viewpoint variation for each object.}
\label{fig_eth_80}
 \vspace{-0.5cm}

\end{figure}

\begin{table*}
\centering
\small
\begin{tabular}{l c c c c c} 
\hline
\hline 
Method (Protocol 2)    &   Accuracy (\%)   &   \#parameters  &  Reduction  &  \#filters ($3\times 3$)  &  Reduction   \\
\hline
\hline
ConvNet \cite{khasanova2017graph}  &    $84.49$     &    1.4M     &    &  230   &  - \\


\hline
ConvNet B      &   $95.58$    &  110K       & $1\times$     &    3780    & $1\times$  \\
ConvNet B ($1\times 1$)      &   $94.33$    &  115K       &  $0.95\times$   &   3780   & $1\times$  \\ 
NPTN-large B ($3$)      &   $94.58$    &  189K       &   $0.58\times$  &   11268     & $0.33\times$  \\
NPTN-small B ($3$)      &   $94.70$    &  120K       &   $0.91\times$  &   4356     & $0.86\times$  \\
PRC-NPTN B  ($8$, 2)      &   $94.34$    &  \textbf{97K}      &   $\mathbf{1.13\times}$   &    \textbf{708}     & $\mathbf{5.33 \times}$  \\

\hline
ConvNet C      &   $95.44$    &  116K       &   $1\times$  &   12740     & $1\times$  \\
ConvNet C ($1\times 1$)      &   $94.40$    &  138K        &  $0.84\times$   &   12740     &  $1\times$  \\

PRC-NPTN C ($8$, 2)      &   $94.68$    &  64K       &   $1.81\times$  &  \textbf{1220}   &  $\mathbf{10.44\times}$  \\  
PRC-NPTN C ($8$, 4)      &   $\mathbf{95.63}$    &  \textbf{39K}      &  $\mathbf{2.97\times}$    &   \textbf{1220}   &   $\mathbf{10.44\times}$   \\  
\hline
\end{tabular}
\caption{\textbf{Test accuracy on ETH-80 Protocol 2.}  For NPTN is number in the bracket denotes $|G|$, for PRC-NPTN the numbers denote $|G|$ and CMP respectively. }
\label{tab_exp_eth_prot2}
\end{table*}


\textbf{Evaluation on the ETH-80 dataset} The ETH-80 dataset was introduced in \cite{leibe2003analyzing} as a benchmark to test models against 3D pose variation of various objects. The dataset contains 80 objects belonging to 8 different classes. Each object has images from different viewpoints on a hemisphere for a total of 41 images per object. The images were resized to $50\times 50$ following \cite{khasanova2017graph}. This dataset is perfectly poised to test how efficiently a model can learn invariance to 3D viewpoint variation.

\textbf{Protocol:} For this experiment, we  devise a new protocol in which we train on one half of the horizontal views and test on the other half. We randomly split the vertical views between these train and test sets. We test with the same set of architectures as in the main paper with the same experimental settings. This protocol is harder since the other side of the object is not seen along with the objects not being symmetrical. Results are shown in Tab.~\ref{tab_exp_eth_prot2}. The results follow a similar tread to that of the original protocol described in the main paper. We find PRC-NPTN outperforms all other methods using a almost 3X less parameters and almost 10X fewer 3x3 filters. These results show that PRC-NPTN are more effective in learning even out of the plane invariances from data itself without any change in architecure.

\subsection{Proof of Lemma 2.1}

\begin{lemma}\label{lem_invariance} (Invariance Property) Assume a  novel test input $x$ and a  filter $ w$ both fixed vectors $ \in \mathbb{R}^d$. Further, let $g$ denote a random variable representing unitary operators with some distribution. Finally, let  $\Upsilon(x) =  \langle  x, gw  \rangle$, with $\Upsilon(x)   \sim  U(a,b) $   \emph{i.e.} a Uniform distribution between $a$ and $b$. Then, we have $$ \text{Var}(\max \Upsilon(x)) \leq  \text{Var}( \Upsilon (x) ) = \mathrm{Var} (\langle  g^{-1}x, w  \rangle) $$

\end{lemma}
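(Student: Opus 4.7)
The plan is to split the claim into two parts and dispatch them separately. The equality $\mathrm{Var}(\Upsilon(x)) = \mathrm{Var}(\langle g^{-1}x, w\rangle)$ is the easy half: since $g$ is unitary, its adjoint equals its inverse, so the adjoint identity gives $\langle x, gw\rangle = \langle g^{*}x, w\rangle = \langle g^{-1}x, w\rangle$ as a pointwise identity of random variables (the randomness is carried by $g$). Taking variances of both sides yields the equality for free.

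For the inequality $\mathrm{Var}(\max \Upsilon(x)) \leq \mathrm{Var}(\Upsilon(x))$, I would interpret $\max \Upsilon(x)$ as the maximum of $n = |\mathcal{G}|$ i.i.d.\ copies of $\Upsilon(x) \sim U(a,b)$ (one per group element pooled over). The strategy is then just direct computation with order statistics of the uniform distribution. First, I would write the distribution function of the max as $F_{\max}(t) = \bigl((t-a)/(b-a)\bigr)^{n}$ for $t \in [a,b]$, and from its density compute the first two moments in closed form, giving $\mathbb{E}[\max \Upsilon(x)] = a + \tfrac{n}{n+1}(b-a)$ and $\mathbb{E}[(\max \Upsilon(x))^2]$ via the standard beta-integral. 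Combining these yields
\begin{equation*}
\mathrm{Var}(\max \Upsilon(x)) \;=\; \frac{n(b-a)^2}{(n+1)^2(n+2)}.
\end{equation*}
Since $\mathrm{Var}(\Upsilon(x)) = (b-a)^2/12$, the claim reduces to the purely algebraic inequality $12n \leq (n+1)^2(n+2)$, i.e.\ $n^3 + 4n^2 - 7n + 2 \geq 0$ for all integers $n \geq 1$. I would verify this by checking $n=1$ (equality) and showing the cubic is increasing on $[1,\infty)$ via its positive derivative $3n^2 + 8n - 7 > 0$ for $n \geq 1$, which also clarifies when equality holds (only in the trivial case of a single element in $\mathcal{G}$).

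The main obstacle, such as it is, is interpretational rather than technical: the lemma statement does not make explicit that $\max \Upsilon(x)$ is over an i.i.d.\ collection indexed by $g \in \mathcal{G}$, nor that the cardinality $n$ plays no role in the inequality itself (the inequality holds uniformly over $n$). I would therefore open the proof with a one-line clarification that $\{\langle x, gw\rangle : g \in \mathcal{G}\}$ is treated as $n$ i.i.d.\ draws from $U(a,b)$, which brings the problem squarely into the order-statistics framework above. No sharper probabilistic tool (concentration, coupling, etc.) is needed, since the uniform assumption makes all integrals elementary.
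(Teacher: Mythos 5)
Your proposal is correct and follows essentially the same route as the paper's proof: both parts (the unitary-adjoint identity for the equality, and the order-statistics computation of $\mathrm{Var}(X_{(n)}) = \frac{n}{(n+1)^2(n+2)}(b-a)^2$ compared against $\frac{(b-a)^2}{12}$ for the inequality) match the paper's argument, differing only in cosmetic details (you verify the final inequality via the cubic $n^3+4n^2-7n+2\geq 0$ and work with $U(a,b)$ directly, while the paper argues monotonicity in $n$ on $U(0,1)$ and rescales). Your opening clarification that the max is over $n$ i.i.d.\ draws indexed by $\mathcal{G}$ is a welcome addition the paper leaves implicit.
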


\begin{proof}

Let $X$ be the random variable representing the randomness in $\langle x, gw    \rangle$ for fixed $x, w$ and random $g$. We assume that $X\sim U(0,1)$.

Considering a sample set $X_1, X_2 ... X_n \sim U(0,1)$ , then $X_{(n)} = \max_{1 \leq i \leq n X_n}$.
Now,
\begin{align}
    P(X_{(n)}  \leq x ) &= P(X_i \leq x, \forall i)\\
    &= P(X_i \leq x)^n \\
    &= x^n
\end{align}

Let the density of $X_{(n)}$ be denoted by $f_{X_{(n)}}(x)$, then

$f_{X_{(n)}}(x) = \left\{   \begin{array}{cc}
   0  & x \leq 0 \\
nx^{n-1}     & 0\leq x \leq 1\\
1 & x \geq 1
\end{array}  \right.$

Now, 

$$\mathbb{E}[X_{(n)}] =  \int_0^1 xnx^{n-1} dx = \frac{x^{n+1}}{n+1}n |_0^1 = \frac{n}{n+1}$$
$$\mathbb{E}[X_{(n)}^2] =  \int_0^1 x^2nx^{n-1} dx = \frac{x^{n+2}}{n+2}n |_0^1 = \frac{n}{n+2} $$
Therefore, 
$$ \mathrm{Var}(X_{(n)}) = \frac{n}{n+1} - \left(\frac{n}{n+1}\right)^2 = \frac{n}{(n+2)(n+1)^2}$$
Since the variance of $U(0,1)$ is $\frac{1}{12}$ \emph{i.e.} $\mathrm{Var(X_i)} = \frac{1}{12}$, and  $\mathrm{Var}(X_{(n)})$ is a decreasing function in $n$, along with the fact that $\mathrm{Var}(X_{(n)})$ for $n=1$ is $\frac{1}{12}$, we have 

$$\mathrm{Var}(X_{(n)})  \leq \mathrm{Var}(X_i)$$

For general $U(a,b)$, it follows shortly after considering $Y_i = \frac{X_i - a}{b-a}$ and that $\mathrm{Var}(Y_i)  = \mathrm{Var}(X_i)$. Finally, due to unitary $g$, $\langle x, gw  \rangle = \langle g^{-1}x, w  \rangle$.
\end{proof}

  \begin{figure}[t]
\footnotesize
    \begin{center}
\begin{algorithmic}[1]\label{alg_1}
    
\STATE\textbf{class PRCN-NPTN:}\\
\STATE	\textbf{def init(self, inch, outch, G, CMP, kernelSize, padding, stride):}\\
\STATE		~~~~self.G = G\\
\STATE		~~~~self.maxpoolSize = CMP\\
\STATE ~~~~self.avgpoolSize = int((inch*self.G)/(self.maxpoolSize*outch))\\
\STATE		~~~~self.expansion = self.G*inch\\
\STATE		~~~~self.conv1 = nn.Conv2d(inch, self.G*inch, kernelSize=kernelSize, groups=inch, padding=padding, bias=False)\\
\STATE		~~~~self.transpool1 = nn.MaxPool3d((self.maxpoolSize, 1, 1))\\
\STATE		~~~~self.transpool2 = nn.AvgPool3d((self.avgpoolSize, 1, 1))\\
\STATE		~~~~self.index = torch.LongTensor(self.expansion).cuda()\\
\STATE		~~~~self.randomlist = list(range(self.expansion))\\
\STATE		~~~~random.shuffle(self.randomlist)\\
\STATE		~~~~for ii in range(self.expansion):\\
\STATE		~~~~		~~~~self.index[ii] = self.randomlist[ii]\\
\STATE
\STATE	\textbf{def forward(self, x):}\\
\STATE		~~~~out = self.conv1(x)  \#inch $\longrightarrow$ G*inch\\
\STATE		~~~~out = out[:,self.index,:,:] \# randomization\\
\STATE		~~~~out = self.transpool1(out) \# G*inch $\longrightarrow$ inch*G/maxpool\\
\STATE		~~~~out = self.transpool2(out) \# inch*G/(maxpool*meanpool) $\longrightarrow$ outch\\
\STATE		~~~~return out

\end{algorithmic}
    \end{center}
    \caption{PRC-NPTN pseudo-code.}
    \label{alg:CoSaMP}
\end{figure}

\begin{table*}[t]
\tiny
\centering
\begin{tabular}{l | c c | c c | c c | c c } 
\hline\hline 
Rotation   &     $0^\circ$  &   *** &   $30^\circ$  &     ***  &   $60^\circ$   &    ***& $90^\circ$  &     ***\\
\hline
\hline

ConvNet (36) &   $  0.70 _{\pm  0.03  }$   &    -    &  $  0.92 _{\pm  0.03  }$   &   -    &   $ 1.32  _{\pm  0.07  }$     &     -   &     $ 1.93  _{\pm 0.02    }$        &  -   \\
ConvNet (36) FC &   $  0.66 _{\pm  0.05  }$   &  -      &   $  0.80 _{\pm  0.03  }$   &  -      &  $ 1.08  _{\pm  0.02  }$     &     -    &    $ 1.58  _{\pm 0.01    }$          &  - \\
ConvNet (512) &   $  0.65 _{\pm  0.04  }$   &  -      &   $ 0.80  _{\pm 0.02   }$   &    -     & $  1.14 _{\pm  0.03  }$     &   -     &     $ 1.54  _{\pm 0.03    }$         &  -  \\

\hline

NPTN (36,1) &    $  0.68 _{\pm 0.04   }$     &  -     &   $ 0.93  _{\pm 0.01   }$     &  -     &  $  1.35 _{\pm  0.05  }$   &  -         &     $  1.92 _{\pm 0.02    }$    &  -        \\

NPTN (18,2) &    $ 0.66  _{\pm 0.02   }$   &  -       &   $  0.87 _{\pm  0.04  }$     &  -     &  $ 1.18  _{\pm  0.02  }$    &  -        &     $ 1.67 _{\pm  0.03  }$     &  -       \\
NPTN (12,3) &   $ 0.68  _{\pm  0.06  }$   &     -    &  $ 0.84  _{\pm 0.02    }$   &   -     &  $ 1.19  _{\pm 0.01   }$     &    -      &   $1.64   _{\pm 0.02}$          &  - \\
NPTN (9,4) &    $ 0.64  _{\pm  0.01  }$     &  -     &   $ 0.88  _{\pm 0.05    }$    &  -      &  $ 1.21  _{\pm 0.03   }$     &  -       &     $ 1.65  _{\pm 0.02    }$     &  -       \\

\hline

PRCN (36,1) &     $  0.62 _{\pm 0.08   }$   &      $ 0.62 _{\pm 0.06 } $       &  $ 0.84  _{\pm 0.01   }$   &       $ 0.83 _{\pm 0.03 } $      & $ 1.17  _{\pm 0.05   }$     &    $ 1.19 _{\pm 0.02 } $         &    $ 1.72  _{\pm 0.05   }$           &   $ 1.73 _{\pm 0.06 } $   \\  
PRCN (18,2) &    $  0.61 _{\pm 0.02   }$   &      $ 0.57 _{\pm 0.02 } $      &   $ \mathbf{ 0.68  _{\pm 0.02   }}$   &     $ 0.73 _{\pm 0.02 } $       &  $  \mathbf{0.93  _{\pm 0.04    }}$     &      $ 0.99 _{\pm 0.04 } $       &    $  \mathbf{1.24  _{\pm 0.01   }}$        &    $ 1.33 _{\pm 0.02 } $     \\
PRCN (12,3) &    $ \mathbf{ 0.58 _{\pm 0.03   }}$   &    $  0.62_{\pm 0.04 } $     &   $ 0.72  _{\pm 0.02   }$   &      $  0.74 _{\pm 0.02 } $      &  $ 0.95  _{\pm 0.01    }$     &      $ 1.04 _{\pm 0.04 } $       &    $  1.28 _{\pm 0.01   }$         &   $ 1.33 _{\pm 0.01 } $    \\
PRCN (9,4) &  $  0.63 _{\pm 0.02   }$   &        $ 0.62 _{\pm 0.04 } $      & $ 0.75  _{\pm 0.03   }$   &     $ 0.77 _{\pm 0.02 } $      &   $ 0.99  _{\pm 0.03    }$     &    $ 1.05 _{\pm 0.03 } $         &    $ 1.31  _{\pm 0.03   }$          &   $  1.40 _{\pm 0.03 } $    \\

\hline
\hline
Translations   &     0 pixels &    ***  &  4 pixels  &      ***    &   8 pixels   &       ***  & 12 pixels  &         *** \\
\hline

ConvNet (36) &   $  0.69 _{\pm  0.04  }$   &      -    & $  0.72 _{\pm  0.01  }$   &     -   &  $ 1.22  _{\pm  0.02  }$     &       -   &   $ 4.43  _{\pm 0.05    }$         &  -  \\
ConvNet (36) FC &   $  0.60 _{\pm  0.02  }$   &      -   &  $  0.64 _{\pm  0.01  }$   &  -     &   $ 0.88  _{\pm  0.05  }$     &    -     &    $ 3.49  _{\pm 0.11    }$         &    \\
ConvNet (512) &   $   0.63 _{\pm 0.02    }$   &       -  &  $ 0.64  _{\pm 0.01   }$   &  -      &  $ 1.00  _{\pm 0.02    }$     &    -     &    $ 3.56  _{\pm 0.04   }$         &   - \\
\hline
NPTN (36,1) &    $  0.68 _{\pm 0.04   }$    &  -    &   $ 0.73  _{\pm 0.02   }$  &     -   &  $  1.23 _{\pm  0.01  }$    &    -    &     $  4.42 _{\pm 0.08    }$    &  -      \\

NPTN (18,2) &    $  0.61 _{\pm 0.04   }$   &   -    &   $ 0.63  _{\pm 0.02   }$   &   -    &  $  1.11 _{\pm  0.02  }$     &   -    &     $  4.10 _{\pm 0.08    }$    &  -      \\
NPTN (12,3) &   $ 0.66  _{\pm  0.02  }$    &      -      &   $ 0.64  _{\pm 0.02    }$   &   -    &   $ 1.09  _{\pm 0.04   }$     &    -      &   $4.19   _{\pm 0.04}$          &  - \\
NPTN (9,4) &    $  0.65 _{\pm  0.05  }$   &    -   &   $0.65   _{\pm 0.03    }$    &   -   &  $ 1.16  _{\pm 0.04   }$     &   -    &     $ 4.42  _{\pm 0.07    }$    &     -   \\

\hline
PRC-NPTN (36,1) &    $  0.65 _{\pm 0.02   }$   &      $ 0.65 _{\pm 0.05} $     &  $ 0.58  _{\pm 0.01   }$   &      $ 0.61 _{\pm 0.04 } $    &  $ 1.02  _{\pm 0.03}$     &      $ 1.00 _{\pm 0.04 } $      &   $  3.85 _{\pm  0.11  }$          &   $ 3.83 _{\pm 0.10} $  \\
PRC-NPTN (18,2) &    $   \mathbf{0.59 _{\pm 0.07 }  }$   &    $ 0.59 _{\pm 0.03 } $      &   $  \mathbf{0.52  _{\pm 0.03  } }$   &     $ 0.58 _{\pm 0.02 } $     &  $  \mathbf{0.80  _{\pm 0.03  }  }$     &       $ 0.88 _{\pm 0.05} $     &   $   \mathbf{3.23 _{\pm 0.03  } }$         &    $ 3.34 _{\pm 0.06} $  \\
PRC-NPTN (12,3) &    $  0.63 _{\pm 0.02   }$   &       $ 0.66 _{\pm 0.08 } $    &  $ 0.55  _{\pm 0.02   }$   &     $ 0.59 _{\pm 0.01 } $    &   $ 0.84  _{\pm 0.04    }$     &     $ 0.89 _{\pm 0.03 } $       &   $ 3.35  _{\pm  0.04  }$         &    $ 3.52 _{\pm 0.12 } $  \\

PRC-NPTN (9,4) &    $  0.65 _{\pm 0.02   }$   &    $ 0.69 _{\pm 0.03 } $      &   $ 0.56  _{\pm 0.03   }$   &    $ 0.56 _{\pm 0.03 } $      &  $ 0.88  _{\pm 0.02    }$     &      $0.97  _{\pm 0.02 } $     &    $  3.49 _{\pm  0.46  }$         &   $ 3.69 _{\pm 0.08 } $   \\
\hline

\end{tabular}
\caption{\textbf{Individual Transformation Results: }Test errors on  MNIST with progressively extreme transformations with a) \textbf{random rotations} and b) \textbf{random pixel shifts}.  $***$ indicates ablation runs without any randomization \emph{i.e.} without any random connectomes (applicable only to PRC-NPTNs). For PRC-NPTN and NPTN the brackets indicate the number of channels in the layer 1 and $G$. ConvNet FC denotes the addition of a 2-layered pooling $1\times 1$ pooling network after every layer. Note that for this experiment, CMP=$|G|$. Permanent Random Connectomes help with achieving better generalization despite increased nuisance transformations.}
\label{tab_exp_trans_1_rot}

\end{table*}

\begin{table*}[t]
\tiny
\centering
\begin{tabular}{l | c c c c c c c  } 

\hline\hline 
Rot/Trans   &   $0^\circ$ 0 & $15^\circ$ 2 &  $30^\circ$  4  & $45^\circ$ 6 &     $60^\circ$  8  & $75^\circ$ 10 &  $90^\circ$   12   \\
\hline
\hline

ConvNet (36)  &   $  0.68 _{\pm  0.03    }$  &   $ 0.72  _{\pm 0.02   }$   &   $ 1.31  _{\pm   0.02 }$   &   $ 2.32  _{\pm 0.04   }$    &   $ 5.06  _{\pm 0.04   }$   &  $  10.90 _{\pm 0.08    }$     &     $ 19.60  _{\pm 0.16    }$     \\
ConvNet (36) FC   &   $ 0.64 _{\pm  0.03    }$   &   $ 0.66 _{\pm 0.01     }$   &   $ 0.95 _{\pm    0.04  }$   &   $ 1.50 _{\pm 0.02     }$   &   $ 3.42 _{\pm 0.03     }$   &   $ 8.14 _{\pm 0.11     }$   &   $ 15.61 _{\pm 0.11     }$   \\
ConvNet (512)  &   $  0.66 _{\pm  0.05    }$  &   $ 0.65  _{\pm 0.02   }$   &   $ 0.97  _{\pm   0.02 }$   &   $ 1.60  _{\pm 0.04   }$    &   $ 3.50  _{\pm 0.04   }$   &  $  7.90_{\pm 0.06    }$     &     $ 15.19  _{\pm 0.09    }$     \\


\hline

NPTN (36,1)  &   $  0.71 _{\pm  0.04    }$  &   $ 0.78  _{\pm 0.02   }$   &   $ 1.27  _{\pm   0.02 }$   &   $ 2.35  _{\pm 0.03   }$    &   $ 5.02  _{\pm 0.14   }$   &  $  11.08 _{\pm 0.09    }$     &     $ 19.66  _{\pm 0.33    }$     \\

NPTN (18,2) &   $  0.65 _{\pm  0.02    }$  &   $ 0.68  _{\pm 0.02   }$   &   $ 1.09  _{\pm   0.02 }$   &   $ 1.94  _{\pm 0.04   }$    &   $ 4.17  _{\pm 0.06   }$   &  $  9.59 _{\pm 0.10    }$     &     $ 17.92  _{\pm 0.20    }$     \\

NPTN (12,3) &   $  0.66 _{\pm  0.02    }$  &   $ 0.69  _{\pm 0.03   }$   &   $ 1.07  _{\pm   0.03 }$   &   $ 1.85  _{\pm 0.02   }$    &   $ 4.24  _{\pm 0.11   }$   &  $  9.58 _{\pm 0.06    }$     &     $ 17.79  _{\pm 0.16    }$     \\

NPTN (9,4) &   $  0.64 _{\pm  0.01    }$  &   $ 0.71  _{\pm 0.02   }$   &   $ 1.09  _{\pm   0.04 }$   &   $ 1.98  _{\pm 0.04   }$    &   $ 4.41  _{\pm 0.09   }$   &  $  9.78 _{\pm 0.16    }$     &     $ 18.14  _{\pm 0.16    }$     \\

\hline
PRC-NPTN (36,1)  &   $ 0.61 _{\pm  0.03   }$   &   $ 0.70 _{\pm 0.01    }$  &   $ 1.09 _{\pm 0.04     }$  &   $ 1.80 _{\pm  0.02   }$  &   $ 3.93 _{\pm 0.02    }$  &   $ 9.09 _{\pm 0.11     }$  &   $ 17.03 _{\pm 0.13    }$   \\

PRC-NPTN (18,2)  &   $  \mathbf{0.57 _{\pm  0.02    }}$  &   $\mathbf{ 0.58  _{\pm 0.01 }  }$   &   $ \mathbf{0.77  _{\pm   0.02} }$   &   $\mathbf{ 1.21  _{\pm 0.07 }  }$    &   $\mathbf{ 2.74  _{\pm 0.04}   }$   &  $ \mathbf{ 6.78 _{\pm 0.12 }   }$     &     $ \mathbf{13.79  _{\pm 0.08}    }$      \\

PRC-NPTN (12,3)  &   $  0.59 _{\pm  0.03    }$  &   $ 0.58  _{\pm 0.01   }$   &   $ 0.78  _{\pm   0.02 }$   &   $ 1.26  _{\pm 0.02   }$    &   $ 2.91  _{\pm 0.05   }$   &  $  7.13 _{\pm 0.09    }$     &     $ 14.23  _{\pm 0.07    }$     \\


PRC-NPTN (9,4)  &   $  0.63 _{\pm  0.04    }$  &   $ 0.59  _{\pm 0.02   }$   &   $ 0.81  _{\pm   0.02 }$   &   $ 1.35  _{\pm 0.02   }$    &   $ 3.12  _{\pm 0.02   }$   &  $  7.26 _{\pm 0.02    }$     &     $ 14.62  _{\pm 0.16    }$  \\
\hline

\end{tabular}
\caption{\textbf{Simultaneous Transformation Results: } Test errors on  MNIST with progressively extreme transformations with \textbf{random rotations} and \textbf{random pixel shifts simultaneously}.  For PRC-NPTN and NPTN the brackets indicate the number of channels in the layer 1 and $G$. Note that for this experiment, CMP=$|G|$.}
\label{tab_exp_trans_2_rot}

\end{table*}

\bibliography{egbib}
\end{document}